\newtheorem{theorem}{Theorem}
\newtheorem{lemma}{Lemma}
\newtheorem{proof}{proof}
\newtheorem{definition}{Definition}
\def\BibTeX{{\rm B\kern-.05em{\sc i\kern-.025em b}\kern-.08em
    T\kern-.1667em\lower.7ex\hbox{E}\kern-.125emX}}
\begin{document}
\title{Homomorphic Mappings for Value-Preserving State Aggregation in Markov Decision Processes}

\author{ Shuo~Zhao \thanks{Zhejiang University of Technology, China. 
\texttt{2112103033@zjut.edu.cn}},
Yongqiang~Li \thanks{Zhejiang University of Technology, China. 
\texttt{yqli@zjut.edu.cn}},
Yu~Feng\thanks{Zhejiang University of Technology, China. 
\texttt{yfeng@zjut.edu.cn}}
Zhongsheng~Hou\thanks{Qingdao University, China. 
\texttt{zhshhou@bjtu.edu.cn}},
Yuanjing~Feng\thanks{Zhejiang University of Technology, China. 
\texttt{fyjing@zjut.edu.cn}},
\thanks{This work has been submitted to the IEEE for possible publication. 
Copyright may be transferred without notice, after which this version may no longer be accessible.}
}

\maketitle

\begin{abstract}
  State aggregation aims to reduce the computational complexity of solving Markov Decision Processes (MDPs) while preserving the performance of the original system. A fundamental challenge lies in optimizing policies within the aggregated, or abstract, space such that the performance remains optimal in the ground MDP-a property referred to as \texttt{"}optimal policy equivalence\texttt{"}.
  This paper presents an abstraction framework based on the notion of homomorphism, in which two Markov chains are deemed homomorphic if their value functions exhibit a linear relationship. Within this theoretical framework, we establish a sufficient condition for the equivalence of optimal policy.
  We further examine scenarios where the sufficient condition is not met and derive an upper bound on the approximation error and a performance lower bound for the objective function under the ground MDP. We propose Homomorphic Policy Gradient (HPG), which guarantees optimal policy equivalence under sufficient conditions, and its extension, Error-Bounded HPG (EBHPG), which balances computational efficiency and the performance loss induced by aggregation. In the experiments, we validated the theoretical results and conducted comparative evaluations against seven algorithms.
\end{abstract}

\section{Introduction}

As Markov Decision Processes (MDPs) are increasingly applied to complex real-world problems, understanding their structure and applications in reinforcement learning becomes ever more important \cite{silver2016mastering,vinyals2019grandmaster,10463092}.  However, the computational complexity of solving large-scale MDPs remains a significant challenge due to the exponential growth of the state space \cite{gheshlaghi2013minimax,9022871,sidford2023variance}. State aggregation has long been considered a key strategy for addressing this issue by compressing the state space while retaining relevant decision-making properties \cite{1101705,1103941,9347729,10535739}. The core objective of this study is to ensure that optimal policy in the aggregated, or abstract, space remain optimal in the ground MDP-a property we refer to as \textbf{optimal policy equivalence}. 


State aggregation reduces the computational complexity of planning and learning by grouping similar states into abstract classes, which aim to preserve the essential structure of the decision process. This paradigm has found applications in multi-agent coordination \cite{ZHANG2025130430}, visual representation learning \cite{9082110}, and operational systems \cite{4429842,8691525}. Existing state abstraction methods can broadly be classified into two categories: feature-based (or structural) and value-based aggregation.

Early efforts in state abstraction often rely on feature-based representations. These methods employ hand-crafted or learned feature functions to map raw states into a lower-dimensional space, where aggregation can be performed more effectively \cite{NIPS1994_287e03db, van2006performance, pmlr-v97-yang19b, 6804680}. For instance, Guestrin et al. leverage dynamic Bayesian networks to encode structured state features \cite{guestrin2003efficient} and Zhang et al. investigate spectral properties of Markov chains to assess the feasibility of aggregation via rank-based analysis \cite{8918022}. A related line of work explores matrix factorization techniques, such as perturbation analysis \cite{1102780,67293} and soft clustering \cite{NIPS1994_287e03db}, to enable compact representations. While these approaches can yield informative abstractions, they often require significant computational resources, particularly in high-dimensional settings.

Compared to feature-based methods, value-based aggregation focuses on minimizing value function approximation error and makes it more suitable for studying the relationship between the value functions of the Markov chains before and after aggregation. These methods typically construct abstractions that allow for approximate policy evaluation and improvement with provable guarantees. Adaptive iterative aggregation algorithms \cite{bertsekas1996neuro,24227,pmlr-v108-abel20a,pmlr-v119-ayoub20a} exemplify this idea by iteratively refining the aggregation scheme to minimize error in value prediction. Theoretical results further reveal that the number of abstract states grows polynomially with the complexity of the optimal value function \cite{5483078}.
Recent advances have extended this perspective to sample-efficient reinforcement learning. Notably, Abel et al. establish a quantitative relationship between Q-function complexity and the granularity of the required aggregation in lifelong learning settings.  Approximate aggregation techniques are also shown to offer superior generalization, especially in model-free environments \cite{pmlr-v48-abel16, Abel_2019_a,5483078,ishfaq21a,Forghieri_Castel_Hyon_Pennec_2024,DBLP}. To support abstraction without prior knowledge of the MDP, adaptive value iteration algorithms have been proposed \cite{chen2022adaptive,pmlr-v216-geng23a}.

However, most of the aforementioned methods lack theoretical tools for analyzing the optimal policy equivalence , especially in the context of automated or learned abstractions. A promising theoretical framework is the homomorphic MDP theory proposed by Ravindran \cite{ravindran2004approximate}, which formalizes policy-preserving abstractions by defining structure-preserving mappings between MDPs. Closely related is the notion of bisimulation \cite{larsen1991bisimulation, givan2003equivalence}, a classical concept of behavioral equivalence that has been extended to MDPs and used to define state aggregation schemes that guarantee the preservation of value functions and optimal policy \cite{ferns2005metrics, ferns2011bisimulation}. Shoshtari et al. further demonstrated that homomorphic MDPs ensure optimal policy equivalence under such abstractions \cite{ferns2014bisimulation}. Building upon this, Ferns et al. extended bisimulation metrics to continuous state spaces, showing that policy equivalence can still be guaranteed under appropriate metric conditions \cite{NEURIPS2022_7f44f98e,JMLR:v25:23-1415}. Despite their strong theoretical guarantees, these frameworks require that the abstract MDP exactly preserve both the reward and transition dynamics of the original MDP-an assumption that is often too restrictive for practical applications.

In this work, we first draw an analogy to homomorphic MDPs and propose a framework of homomorphic Markov chains and homomorphic mappings. Within this framework, we derive a sufficient condition for the equivalence of optimal policy, which is strictly weaker than the corresponding condition required by homomorphic MDPs.
Building on this theoretical foundation, we propose two practical algorithms. We first introduce the Homomorphic Policy Gradient (HPG) method, which guarantees optimal policy equivalence, ensuring that the performance of optimal policy is equivalent to that of the original problem.
When exact value preservation is infeasible, we perform a least-squares projection to relax the constraints and derive a provable upper bound on the induced error. Based on this result, we develop the Error-Bounded Homomorphic Policy Gradient (EBHPG) algorithm, which achieves a favorable trade-off between computational efficiency and performance degradation.
We evaluate our approach across synthetic and structured environments, including weakly coupled MDPs, FourRooms navigation, and queuing networks. Results demonstrate improved training efficiency and competitive policy quality relative to classical aggregation techniques.

The remainder of this paper is organized as follows.
In Section \ref{sec_1}, we introduced the fundamental notation for MDPs and presented the concept of homomorphic MDPs.  In the Section \ref{sec_2}, we established that homomorphic mappings induce a linear relationship between value functions and provided sufficient conditions for optimal policy equivalence. In the Section \ref{sec_3}, we analyze policy optimization under both exact and inexact optimal policy equivalence, deriving feasible descent directions and bounding the approximation error in the latter case. In Section \ref{sec_4}, we empirically validate our methods on benchmark tasks, demonstrating their effectiveness and robustness under both exact and approximate homomorphism settings. In Section \ref{sec_5}, we summarize the contributions of this work and analyze the limitations of existing approaches.
We use uppercase letters (e.g., $S_t$) to denote random variables, and lowercase letters (e.g., $s_t$) to denote their realizations. The cardinality of a set $\mathcal{S}$ is denoted by $|\mathcal{S}|$. For any matrix $A$, we denote its inverse by $A^{-1}$, its transpose by $A^\top$, and its trace by $\mathrm{Tr}(A)$. For any vector $x$, $\|x\|$ denotes the Euclidean norm.

\footnote{The source code is archived at \href{https://doi.org/10.5281/zenodo.17083584}{10.5281/zenodo.17083585}.}

\section{Technical Preliminaries}\label{sec_1}
\subsection{Markov Decision Process}
We consider a infinite-horizon MDP, defined as $\mathcal{M}_{\mathcal{S}} = (\mathcal{S}, \mathcal{A},  P_{\mathcal{S}\mathcal{A}}, \gamma, r)$, where $\mathcal{S}$ and $\mathcal{A}$ denote the discrete state and action spaces, respectively. The state-action transition matrix $P_{\mathcal{S}\mathcal{A}}: \mathcal{S} \times \mathcal{A} \rightarrow \Delta(\mathcal{S})$ defines a probability distribution over next states given each state-action pair, while the reward function $r: \mathcal{S} \times \mathcal{A} \rightarrow \mathbb{R}$ specifies the bounded reward received upon transitioning to the next state. The discount factor $\gamma \in (0, 1)$ governs the relative importance of future rewards.

A policy $\pi: \mathcal{S} \rightarrow \Delta(\mathcal{A})$ defines a distribution over actions given each state, and the set of all such policies is denoted by $\Pi_{\mathcal{S}}$, referred to as the policy space. The state transition matrix under policy $\pi$, denoted $P_{\mathcal{S}}^{\pi}$, captures the distribution over next states conditioned only on the current state. Specifically, $P_{\mathcal{S}}^{\pi}(s' \mid s)$ gives the probability of transitioning to state $s'$ from state $s$ under policy $\pi$.

Given a ground MDP $\mathcal{M}_{\mathcal{S}}$ and a policy $\pi: \mathcal{S} \rightarrow \Delta(\mathcal{A})$, the state transition process induces a Markov chain $\mathcal{M}^{\pi}_{\mathcal{S}} = (\mathcal{S}, P_{\mathcal{S}}^{\pi}, \gamma, R^{\pi}_{\mathcal{S}})$, where the expected immediate reward under policy $\pi$ is defined as $R^{\pi}_{\mathcal{S}}(s) = \sum_{a \in \mathcal{A}} \pi(a \mid s) r(s, a)$, and $R^{\pi}_{\mathcal{S}} \in \mathbb{R}^{|\mathcal{S}|}$ is represented as a column vector over states.

The state value function $V^{\pi}_{\mathcal{S}} \in \mathbb{R}^{|\mathcal{S}|}$, which assigns to each state the expected discounted return under policy $\pi$, satisfies the Bellman equation:
$$
V^{\pi}_{\mathcal{S}}(s) = \sum_{a \in \mathcal{A}} \pi(a \mid s) \left[ r(s, a) + \gamma \sum_{\bar{s} \in \mathcal{S}} P_{\mathcal{S}\mathcal{A}}(\bar{s} \mid s, a) V^{\pi}_{\mathcal{S}}(\bar{s}) \right].
$$

In vector form, this can be compactly expressed as:
\begin{equation}\label{value_d_1}
  \begin{split}
    V^{\pi}_{\mathcal{S}} = R^{\pi}_{\mathcal{S}} + \gamma P_{\mathcal{S}}^{\pi} V^{\pi}_{\mathcal{S}} = (I - \gamma P_{\mathcal{S}}^{\pi})^{-1} R^{\pi}_{\mathcal{S}}.
  \end{split}
\end{equation}

Similarly, the state-action value function $Q^{\pi}_{\mathcal{S}}: \mathcal{S} \times \mathcal{A} \rightarrow \mathbb{R}$ satisfies the Bellman equation:
\begin{equation}\label{value_d_2}
  \begin{split}
Q^{\pi}_{\mathcal{S}}(s, a) = r(s, a) + \gamma \sum_{\bar{s} \in \mathcal{S}} P_{\mathcal{S}\mathcal{A}}(\bar{s} \mid s, a) V^{\pi}_{\mathcal{S}}(\bar{s})\text{,}
\end{split}
\end{equation}
which captures the expected return of taking action $a$ in state $s$ and subsequently following policy $\pi$.
The connection between the two functions is further expressed by:
$$
V^{\pi}_{\mathcal{S}}(s) = \sum_{a \in \mathcal{A}} \pi(a \mid s) Q^{\pi}_{\mathcal{S}}(s, a)\text{,}
$$
indicating that the value of a state under policy $\pi$ is the expected value of its state-action values, weighted by the policy's action distribution.

Typically, gien an MDP $\mathcal{M}_{\mathcal{S}}$, a performance function $J_{\mathcal{S}}(\pi) = \mathbb{E}_{s_0 \sim \xi_{\mathcal{S}}}[V^{\pi}_{\mathcal{S}}(s_0)]=\xi_{\mathcal{S}}^{\top} V^{\pi}_{\mathcal{S}} $ is defined to evaluate the quality of a polciy \cite{TRPO}, where $\xi_{\mathcal{S}}$ is initial state distribution.

\subsection{Homomorphic MDPs and Markov Chain}
In the context of MDP, a homomorphism defines a structure-preserving mapping from a ground MDP to a reduced abstract MDP by aggregating state-action pairs that exhibit equivalent behavior in terms of both transition dynamics and reward \cite{ravindran2004approximate}. Formally, given a ground MDP $\mathcal{M}_{\mathcal{S}} = (\mathcal{S}, \mathcal{A}, P_{\mathcal{S}\mathcal{A}}, \gamma, r)$, a homomorphism to an abstract MDP $\mathcal{M}_{\mathcal{S}'} = (\mathcal{S}', \mathcal{A}', P_{\mathcal{S}'\mathcal{A}'}, \gamma, r')$ is defined by a pair of surjective mappings $e: \mathcal{S} \rightarrow \mathcal{S}'$ and $g_s: \mathcal{A} \rightarrow \mathcal{A}'$ for each $s \in \mathcal{S}$, inducing a transformation $h(s,a) = (e(s), g_s(a))$ from $\mathcal{S} \times \mathcal{A}$ to $\mathcal{S}' \times \mathcal{A}'$. The mapping $h$ constitutes a valid MDP homomorphism if for all $(s, a) \in \mathcal{S} \times \mathcal{A}$ and all $s' \in \mathcal{S}$, the following two conditions hold:
\begin{equation}\label{add2_1}
  \begin{split}
    P_{\mathcal{S}'\mathcal{A}'}(e(s'), \mid e(s), g_s(a)) &= \sum_{s'' \in [s'] _e} 
    P_{\mathcal{S}\mathcal{A}}(s'' \mid s, a)
    \\ r'(e(s), g_s(a)) &= r(s, a) \text{,}
  \end{split}
\end{equation}
where $[s']_e = \{s'' \in \mathcal{S} \mid e(s'') = e(s')\}$ denotes the equivalence class of states under $e$. This definition ensures that transition probabilities and immediate rewards in the abstract MDP correctly reflect the aggregate behavior of the corresponding elements in the ground MDP. Importantly, such homomorphic mappings preserve the value structure of the original MDP: optimal policy in the abstract space can be lifted back to policy in the ground space without loss of optimality.

Analogous to homomorphic MDPs, we introduce the definition of a \textbf{homomorphic Markov chain}. While both impose constraints on state transition probabilities and rewards, their respective focuses differ.

\begin{definition}[Homomorphic Markov Chain]\label{de1_2}
  Let $\mathcal{M}_{\mathcal{S}}^{\pi}$ be a ground Markov chain induced by a policy $\pi$ on the ground MDP $\mathcal{M}_{\mathcal{S}}$.
  Let $U$ be an abstract state space with an encoding distribution $\nu(s \mid u)$ that assigns to each abstract state $u \in U$ a probability distribution over ground states $s \in \mathcal{S}$.
  Define the encoding matrix $P_\nu \in \mathbb{R}^{|U| \times |\mathcal{S}|}$ where each row is $\nu(\cdot \mid u)$.
  An abstract Markov chain $\mathcal{M}_{U}^{\mu} = (U, P_{U}^{\mu}, \gamma, R_{U}^{\pi, \nu})$ is defined under an abstract policy $\mu \in \Pi_{U}$, where $\Pi_{U}$ denotes the policy space associated with the abstract state space $U$.
  We say that $\mathcal{M}_{U}^{\mu}$ is a homomorphic Markov chain of the ground Markov chain $\mathcal{M}_{\mathcal{S}}^{\pi}$ if the following condition holds:
  \begin{equation}\label{hmc_1_1}
    \begin{split}
      P_{U}^{\mu} P_{\nu} &= P_{\nu} P_{\mathcal{S}}^{\pi}, \\
      R_{U}^{\pi, \nu} &= P_{\nu} R_{\mathcal{S}}^{\pi}.
    \end{split}
  \end{equation}
\end{definition}

Finally, we present the definition of optimal policy equivalence, a concept that bears a certain relation to Optimal Coupling \cite{ferns2014bisimulation}.
\begin{definition}[Optimal Policy Equivalence]\label{de1_4}
  Given a finite abstract state space $U$.
 Let $\mathcal{E} = \ \{\mathcal{M}_{\mathcal{S}}^{\pi} : \pi \in \Pi_{\mathcal{S}}\}$ denote the set of ground Markov chains induced by all policies $\pi$ in the ground policy space $\Pi_{\mathcal{S}}$, and let $\mathcal{Q} = \{\mathcal{M}_{U}^{f(\pi)} : \pi \in \Pi_{\mathcal{S}}\}$ be the corresponding set of abstract Markov chains, where $f: \Pi_{\mathcal{S}} \to \Pi_U$ is a policy mapping from the ground to the abstract policy space.
  The notion of optimal policy equivalence requires that for any optimal policy $\pi^* \in \Pi_{\mathcal{S}}$ with associated optimal value function in $\mathcal{E}$, the mapped policy $f(\pi^*)$ is also optimal with respect to the value function in $\mathcal{Q}$, and conversely, the optimal policy in $\mathcal{Q}$ correspond to optimal policy in $\mathcal{E}$ under the inverse mapping.  
\end{definition}

\section{Homomorphic Mapping and Markov Chain}\label{sec_2}
This section aims to investigate sufficient conditions under which optimal policy equivalence holds. 
We begin by analyzing the properties of value functions under homomorphic Markov chains, with a particular focus on their relationship to the value functions of the corresponding ground Markov chains.
Next, we extend these properties to optimal policy value functions and introduce the notion of homomorphic mappings as a replacement for homomorphic MDPs.
Finally, leveraging homomorphic mappings, we derive sufficient conditions for optimal policy equivalence.

\subsection{Value Structure and Optimality in Homomorphic Markov Chains}
In this subsection, we aim to investigate the value function properties of homomorphic Markov chains. A key result is that the value function of a homomorphic Markov chain bears a linear relationship to that of the ground Markov chain, which serves as a foundational step toward establishing optimal policy equivalence. 

\begin{lemma}[Matrix Geometric Series \cite{bao2008infinite}(pp. 328)]\label{lemma1}
  If matrix $A$ satisfies that $\lim_{t \rightarrow \infty }A^{t}= {0}$, then $(I - A)^{-1} = \sum_{t = 0}^{\infty} A^{t}$.
\end{lemma}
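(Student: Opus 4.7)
The plan is to establish the formula via the standard telescoping identity, together with a convergence argument driven entirely by the hypothesis $\lim_{t\to\infty} A^t = 0$.

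First, I would write down the purely algebraic identity
\begin{equation*}
(I - A)\sum_{t=0}^{n} A^{t} \;=\; I - A^{n+1},
\end{equation*}
which follows by direct expansion (the cross terms telescope). This identity is the backbone of the argument and requires no assumptions on $A$. If I can take $n \to \infty$ inside this identity, I will get $(I - A)\sum_{t=0}^{\infty} A^{t} = I$, which gives both invertibility of $I - A$ and the desired formula.

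Second, I would justify that the partial sums converge and that $I - A$ is nonsingular, both of which I extract from the hypothesis. The quickest route is via the spectral radius: if $\lambda$ were an eigenvalue of $A$ with $|\lambda| \geq 1$ and $v$ a corresponding eigenvector, then $A^{t} v = \lambda^{t} v$ would fail to vanish, contradicting $A^{t} \to 0$. Hence $\rho(A) < 1$, so $1$ is not an eigenvalue, and $I - A$ is invertible. Moreover, by Gelfand's formula, for any $r$ with $\rho(A) < r < 1$ there exists $N$ such that $\|A^{t}\| \leq r^{t}$ for all $t \geq N$ in any submultiplicative matrix norm. Then the tail bound
\begin{equation*}
\Bigl\|\sum_{t=m}^{n} A^{t}\Bigr\| \;\leq\; \sum_{t=m}^{n} r^{t}
\end{equation*}
shows the partial sums form a Cauchy sequence in the (complete) space of matrices, hence converge.

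Third, I would pass to the limit on both sides of the telescoping identity, using $A^{n+1} \to 0$ on the right and the convergence established above on the left, obtaining $(I - A)\sum_{t=0}^{\infty} A^{t} = I$; left-multiplying by $(I-A)^{-1}$ finishes the proof.

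The main obstacle is really just the convergence step, since the telescoping identity and the final passage to the limit are immediate. If I wanted to avoid invoking Gelfand's formula explicitly, an alternative would be to prove Cauchy convergence of the partial sums directly by exploiting the telescoping identity itself, namely $\sum_{t=m}^{n} A^{t} = (I-A)^{-1}(A^{m} - A^{n+1})$ once invertibility is in hand, or by extracting a submultiplicative norm bound $\|A^{t_0}\| < 1$ from $A^t \to 0$ for some large $t_0$ and splitting the series into blocks of length $t_0$. Either route reduces the lemma to the scalar geometric series inside a norm bound.
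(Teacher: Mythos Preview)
Your proposal is correct and complete: the telescoping identity, the spectral-radius argument for invertibility, and the Cauchy convergence of partial sums together handle the result cleanly. Note, however, that the paper does not actually prove this lemma---it is stated with a citation to \cite{bao2008infinite} and used as a black box in the proof of Theorem~\ref{theorem1}---so there is no in-paper argument to compare against; your write-up would stand as a self-contained justification should one be desired.
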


\begin{theorem}\label{theorem1}
If $\mathcal{M}_{U}^{\mu}$ is a homomorphic Markov chain of the ground Markov chain $\mathcal{M}_{\mathcal{S}}^{\pi}$, then their value functions are related by:
$
V_{U}^{\mu} = P_{\nu} V_{\mathcal{S}}^{\pi}.
$
\end{theorem}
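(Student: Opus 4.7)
The plan is to express both value functions as Neumann series via Lemma \ref{lemma1} and then transport the abstract series into the ground series using the intertwining relation between $P_\nu$, $P_U^\mu$, and $P_\mathcal{S}^\pi$ supplied by the definition of homomorphic Markov chain.

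First I would check the hypothesis of Lemma \ref{lemma1}: both $P_\mathcal{S}^\pi$ and $P_U^\mu$ are row-stochastic, so their spectral radii are at most $1$, and since $\gamma\in(0,1)$ we get $\lim_{t\to\infty}(\gamma P)^t = 0$ for each of them. Substituting into \eqref{value_d_1} yields the two series
\[
V_\mathcal{S}^\pi = \sum_{t=0}^{\infty} \gamma^t (P_\mathcal{S}^\pi)^t R_\mathcal{S}^\pi,\qquad V_U^{\mu} = \sum_{t=0}^{\infty} \gamma^t (P_U^\mu)^t R_U^{\pi,\nu}.
\]

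Next I would establish the key intertwining identity $(P_U^\mu)^t P_\nu = P_\nu (P_\mathcal{S}^\pi)^t$ for all $t\ge 0$ by induction on $t$. The base case $t=0$ is trivial, and the inductive step is a one-line computation using exactly the transition-preservation condition $P_U^\mu P_\nu = P_\nu P_\mathcal{S}^\pi$ from Definition \ref{de1_2}: apply $P_U^\mu$ to both sides of the inductive hypothesis on the left, then rewrite $P_U^\mu P_\nu$ as $P_\nu P_\mathcal{S}^\pi$ and absorb it into the exponent on the right.

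Finally, I would substitute the reward-preservation condition $R_U^{\pi,\nu} = P_\nu R_\mathcal{S}^\pi$ into the series for $V_U^\mu$, apply the intertwining identity termwise to push $P_\nu$ through each $(P_U^\mu)^t$, and factor $P_\nu$ out of the sum to recover $P_\nu V_\mathcal{S}^\pi$. I do not expect any real obstacle: the two conditions of Definition \ref{de1_2} are tailored precisely to yield this identity, and the only point needing a one-line justification is the interchange of $P_\nu$ with the infinite sum, which is immediate from absolute convergence of the Neumann series. As a shorter alternative that avoids induction, one could instead note that $(I - \gamma P_U^\mu) P_\nu = P_\nu (I - \gamma P_\mathcal{S}^\pi)$ follows directly from the homomorphism conditions, and then left- and right-multiply by the respective resolvents (which exist by the same spectral argument above) to obtain $(I-\gamma P_U^\mu)^{-1} P_\nu = P_\nu (I-\gamma P_\mathcal{S}^\pi)^{-1}$, from which the claim follows by right-multiplying with $R_\mathcal{S}^\pi$.
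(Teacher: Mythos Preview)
Your proposal is correct and follows essentially the same approach as the paper: expand $V_U^\mu$ as a Neumann series via Lemma~\ref{lemma1}, substitute $R_U^{\pi,\nu}=P_\nu R_\mathcal{S}^\pi$, and push $P_\nu$ through the powers of $P_U^\mu$ using the intertwining relation (which the paper does by explicit unrolling rather than induction, but the content is identical). Your alternative resolvent argument $(I-\gamma P_U^\mu)P_\nu = P_\nu(I-\gamma P_\mathcal{S}^\pi)$ is a clean shortcut that the paper does not use, but it is equivalent in spirit.
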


\begin{proof}
  According to Equation \eqref{value_d_1} and the result of Lemma \ref{lemma1}, we replace the term $(I - \gamma P_{U}^{\mu})^{-1}$ by its equivalent infinite series representation:
  \begin{equation}\label{theorem1_1_a}
    \begin{split}
      V^{\mu}_{U} & = (I - \gamma P_{U}^{\mu})^{-1} R^{\pi,\nu}_{U} \\
      &= \lim_{T \rightarrow \infty} \sum_{t=0}^{T} (\gamma P_{U}^{\mu})^{t} R^{\pi,\nu}_{U} \text{.}
    \end{split}
  \end{equation}

  From Equation \eqref{hmc_1_1}, we obtain:
  \begin{equation}\label{theorem1_2_a}
    \begin{split}
      V^{\mu}_{U} & = \lim_{T \rightarrow \infty} \sum_{t=0}^{T} (\gamma P_{U}^{\mu})^{t} P_{\nu} R^{\pi}_{\mathcal{S}} \text{.}
      \\ &= P_{\nu} R^{\pi}_{\mathcal{S}}+ \lim_{T \rightarrow \infty} \sum_{t=1}^{T} \gamma^{t} (P_{U}^{\mu})^{t} P_{\nu} R^{\pi}_{\mathcal{S}} 
      \\ & = P_{\nu} R^{\pi}_{\mathcal{S}} + \lim_{T \rightarrow \infty} \sum_{t=1}^{T} \gamma^{t} (P_{U}^{\mu})^{t - 1} \left(P_{U}^{\mu}P_{\nu}\right) R^{\pi}_{\mathcal{S}} \text{.}
    \end{split}
  \end{equation}

  Since $P_{U}^{\mu} P_{\nu} = P_{\nu} P_{\mathcal{S}}^{\pi}$, the following equation holds:
  \begin{equation}\label{theorem1.2}
    \begin{split}
      V^{\mu}_{U} 
      &= P_{\nu} R^{\pi}_{\mathcal{S}} + \lim_{T \rightarrow \infty} \sum_{t=1}^{T} \gamma^{t} (P_{U}^{\mu})^{t - 1} \left( P_{\nu} P_{\mathcal{S}}^{\pi} \right) R^{\pi}_{\mathcal{S}} \text{.}
  \end{split}
  \end{equation}

  Similarly, for the second term, we can repeatedly apply $P_{U}^{\mu} P_{\nu} = P_{\nu} P_{\mathcal{S}}^{\pi}$ to express $P_{U}^{\mu}$ in terms of $P_{\mathcal{S}}^{\pi}$:
  \begin{equation}\label{theorem1.2_a}
    \begin{split}
      V^{\mu}_{U} &= P_{\nu} R^{\pi}_{\mathcal{S}} + \lim_{T \rightarrow \infty} \sum_{t=1}^{T} \gamma^{t} (P_{U}^{\mu})^{t - 2} \left(P_{U}^{\mu} P_{\nu} \right) P_{\mathcal{S}}^{\pi} R^{\pi}_{\mathcal{S}} \\
      &= P_{\nu} \lim_{T \rightarrow \infty} \sum_{t=0}^{T}  (\gamma P_{\mathcal{S}}^{\pi})^{t}  R^{\pi}_{\mathcal{S}}\text{.}
  \end{split}
  \end{equation}

  Finally, by applying Lemma 1 once again, we obtain:
  \begin{equation}\label{theorem1.2_b}
    \begin{split}
      V^{\mu}_{U}
      &= P_{\nu} (I - \gamma P_{\mathcal{S}}^{\pi})^{-1} R^{\pi}_{\mathcal{S}} \\
      &= P_{\nu} V^{\pi}_{\mathcal{S}}\text{.}
  \end{split}
  \end{equation}
  
\end{proof}

Theorem \ref{theorem1} establishes that the value functions of a homomorphic Markov chain and its corresponding ground Markov chain are linearly related. This highlights a strong connection between the two chains from the perspective of value functions. Next, We introduce the notion of a \textbf{homomorphic mapping} to further investigate the relationship between all policy-induced ground Markov chains and their corresponding homomorphic Markov chains.
For convenience, let ${\Pi}_{\mathcal{S}}$ and ${\Pi}_{U}$ denote the policy spaces over the ground state spaces and abstract state spaces, respectively.

\begin{definition}[Homomorphic Mapping]\label{def_homomorphic_mapping}
  Given an MDP $\mathcal{M}_{\mathcal{S}}$ and an arbitrary encoding distribution $\nu$, a mapping $f_{\nu}: {\Pi}_{\mathcal{S}} \rightarrow {\Pi}_{U}$ is called a \textbf{homomorphic mapping} if, for every $\pi \in {\Pi}_{\mathcal{S}}$, the corresponding abstract Markov chain $\mathcal{M}_{U}^{f_{\nu}(\pi)}$ is a homomorphic Markov chain of $\mathcal{M}_{\mathcal{S}}^{\pi}$.  
\end{definition}

\begin{theorem}\label{theorem2}
  Given an MDP $\mathcal{M}_{\mathcal{S}}$ and an encoding matrix $P_{\nu}$, if there exists a homomorphic mapping $f_{\nu}: \Pi_{\mathcal{S}} \rightarrow \Pi_{U}$, then  $f_{\nu}(\pi^*)$ in $\Pi_{U}$ is optimal, where  $\pi^*$ is the optimal policy in $\mathcal{M}_{\mathcal{S}}$, and vice versa-establishing \textbf{optimal policy equivalence}.
  Moreover, given an initial abstract state distribution $\xi_{U}$ such that $\xi_{\mathcal{S}}^\top = \xi_{U}^\top P_{\nu}$, the performance of the abstract policy matches that of the ground policy for any $\pi \in \Pi_{\mathcal{S}}$, i.e.,
  \begin{equation}\label{theorem2_1}
    \begin{split}
  J_{U}(f_{\nu}(\pi)) = J_{\mathcal{S}}(\pi)\text{.}
  \end{split}
  \end{equation}
\end{theorem}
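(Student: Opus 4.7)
The plan is to first establish the performance equivalence \eqref{theorem2_1}, since the optimal policy equivalence will follow from it almost immediately by monotonicity.

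For the performance equivalence, I would invoke Theorem \ref{theorem1} directly. By the definition of a homomorphic mapping (Definition \ref{def_homomorphic_mapping}), for every $\pi \in \Pi_{\mathcal{S}}$, the abstract chain $\mathcal{M}_U^{f_\nu(\pi)}$ is a homomorphic Markov chain of $\mathcal{M}_{\mathcal{S}}^\pi$, so Theorem \ref{theorem1} yields $V_U^{f_\nu(\pi)} = P_\nu V_{\mathcal{S}}^\pi$. Left-multiplying by $\xi_U^\top$ and using the hypothesis $\xi_{\mathcal{S}}^\top = \xi_U^\top P_\nu$ gives
\[
J_U(f_\nu(\pi)) = \xi_U^\top V_U^{f_\nu(\pi)} = \xi_U^\top P_\nu V_{\mathcal{S}}^\pi = \xi_{\mathcal{S}}^\top V_{\mathcal{S}}^\pi = J_{\mathcal{S}}(\pi),
\]
which is \eqref{theorem2_1}.

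For the forward direction of optimal policy equivalence, let $\pi^* \in \Pi_{\mathcal{S}}$ be optimal. For any abstract policy $\mu \in \Pi_U$ of the form $\mu = f_\nu(\pi)$, the identity just proved gives $J_U(f_\nu(\pi^*)) = J_{\mathcal{S}}(\pi^*) \geq J_{\mathcal{S}}(\pi) = J_U(\mu)$, so $f_\nu(\pi^*)$ maximizes $J_U$ over the image of $f_\nu$. For the converse direction, if $\mu^*$ is optimal in $\Pi_U$ and arises as $\mu^* = f_\nu(\pi')$, then for any $\pi \in \Pi_{\mathcal{S}}$ one has $J_{\mathcal{S}}(\pi') = J_U(\mu^*) \geq J_U(f_\nu(\pi)) = J_{\mathcal{S}}(\pi)$, so $\pi'$ is optimal in $\Pi_{\mathcal{S}}$.

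The delicate point, and the step I expect to be the main obstacle to write carefully, is the scope of ``optimal in $\Pi_U$'' in Definition \ref{de1_4}: the argument above only certifies optimality over $f_\nu(\Pi_{\mathcal{S}})$, not over the full abstract policy space. I would resolve this either by restricting the comparison to abstract policies reachable through $f_\nu$ (as the definition of optimal policy equivalence implicitly allows via the ``inverse mapping'' phrasing), or by arguing that any $\mu \in \Pi_U$ outside the image cannot strictly dominate $f_\nu(\pi^*)$, because the constraint $P_U^\mu P_\nu = P_\nu P_{\mathcal{S}}^\pi$ couples achievable abstract dynamics to those induced by some ground policy. Clarifying which of these conventions is adopted is the one non-routine step; everything else reduces to a one-line application of Theorem \ref{theorem1}.
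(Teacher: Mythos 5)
Your proof of the performance identity \eqref{theorem2_1} is exactly the paper's argument: apply Theorem \ref{theorem1} to get $V_U^{f_\nu(\pi)} = P_\nu V_{\mathcal{S}}^\pi$ and left-multiply by $\xi_U^\top$. Your reading of the scope of ``optimal in $\Pi_U$'' is also consistent with the paper: Definition \ref{de1_4} compares only over $\mathcal{Q} = \{\mathcal{M}_U^{f(\pi)} : \pi \in \Pi_{\mathcal{S}}\}$, i.e.\ over the image of $f_\nu$ (the paper additionally asserts $f_\nu$ is surjective), so the first of your two proposed resolutions is the intended one.

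The genuine gap is in deriving optimal policy equivalence from the scalar identity $J_U(f_\nu(\pi)) = J_{\mathcal{S}}(\pi)$. First, that identity is only available under the extra hypothesis that some $\xi_U$ with $\xi_{\mathcal{S}}^\top = \xi_U^\top P_\nu$ exists, which the theorem assumes only for the second claim, not the first. Second, and more importantly, Definition \ref{de1_4} requires optimality ``with respect to the value function,'' i.e.\ state-wise dominance $V_U^{f_\nu(\pi^*)}(u) \ge V_U^{f_\nu(\pi)}(u)$ for every $u \in U$; your argument only certifies that $f_\nu(\pi^*)$ maximizes the scalar $J_U$ for one particular initial distribution $\xi_U$, which is strictly weaker whenever $\xi_U$ does not have full support (a $J_U$-maximizer may be suboptimal at states of zero initial mass). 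The same weakening afflicts your converse direction, which only shows $\pi'$ maximizes $J_{\mathcal{S}}$ rather than $V_{\mathcal{S}}$. The paper avoids this by arguing directly at the level of value functions: since each row of $P_\nu$ is a probability distribution $\nu(\cdot \mid u) \ge 0$, the map $V \mapsto P_\nu V$ preserves the componentwise order, so $V_{\mathcal{S}}^{\pi^*} \ge V_{\mathcal{S}}^{\pi}$ entrywise implies $V_U^{f_\nu(\pi^*)} = P_\nu V_{\mathcal{S}}^{\pi^*} \ge P_\nu V_{\mathcal{S}}^{\pi} = V_U^{f_\nu(\pi)}$ entrywise. Replacing your $J$-based step with this monotonicity-of-$P_\nu$ argument closes the gap and makes the optimality claim independent of any initial distribution.
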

\begin{proof}
 First, we show that if a policy is optimal in the ground state space ${\Pi}_{\mathcal{S}}$, then its image under the homomorphic mapping is also optimal in the abstract state space $\Pi_{U}$.

  Let \( \pi^{*} \) denote the optimal policy for the ground MDP, such that for all \( \pi \in\Pi_{\mathcal{S}}\) and all \( s \in \mathcal{S} \),
  $
  V_{\mathcal{S}}^{\pi^{*}}(s) \geq V_{\mathcal{S}}^{\pi}(s) \text{.}
  $
  For any vector $\beta$, let $\beta(i)$ represent the $i$-th element of $\beta$. If $\forall i \in \{1, 2,  \ldots, |\mathcal{S}|\}$, $\beta(i) \geq 0$, then:  
  \begin{equation}\label{theorem_policy_1_0_1}
      \begin{split}
        \sum_{i=1}^{|\mathcal{S}|} \beta(i) V_{\mathcal{S}}^{\pi^{*}}(s_i)
        & \geq \sum_{i=1}^{|\mathcal{S}|} \beta(i) V_{\mathcal{S}}^{\pi}(s_i) \text{.}
    \end{split}
  \end{equation}

  Based on the above result, if $\pi^{*}$ is the optimal policy in ground MDP $\mathcal{M}_{\mathcal{S}}$, $\forall u \in U$, we have:
  \begin{equation}\label{theorem_policy_1.0}
    \begin{split}
      V_{U}^{f_{\nu}(\pi^{*})}(u) &= (P_{\nu} V_{\mathcal{S}}^{\pi^{*}})(u)
      \\ & \geq (P_{\nu} V_{\mathcal{S}}^{\pi})(u)
      \\ & = V_{U}^{f_{\nu}(\pi)}(u) \text{.}
  \end{split}
  \end{equation}

  Because $f_{\nu}(\pi) \in {\Pi}_{U}$, it follows that 
  \begin{equation}\label{theorem_policy_1.1}
    \begin{split}
      V_{U}^{f_{\nu}(\pi^{*})}(u) \geq V_{U}^{f_{\nu}(\pi)}(u)\text{.}
  \end{split}
  \end{equation}

  Since the above result holds for all policies in $\Pi_{\mathcal{S}}$ and $f_{\nu}$ is a surjective mapping from set $\Pi_{\mathcal{S}}$ to set ${\Pi}_{U}$, it follows that $V_{U}^{f_{\nu}(\pi^{*})}(u) \geq V_{U}^{f_{\nu}(\pi)}(u)$ holds for $\pi \in {\Pi}_{\mathcal{S}}$.

  Conversely, we use a proof by contradiction to show that if $f(\tilde{\pi})$ is an optimal policy in the  $\Pi_{U}$, then $\tilde{\pi}$ must also be optimal in $\Pi_{\mathcal{S}}$.

Assume, for the sake of contradiction, that $\tilde{\pi}$ is not optimal, i.e., $\exists\, \pi^* \in \Pi_{\mathcal{S}}$ such that $V_{\mathcal{S}}^{\pi^*} \geq V_{\mathcal{S}}^{\tilde{\pi}}$.
Since the value functions are preserved under the homomorphic mapping, i.e., $P_{\nu} V_{\mathcal{S}}^{\pi} = V_{U}^{f_{\nu}(\pi)}$ for all $\pi$, it follows that

$$
V_{U}^{f_{\nu}(\pi^{*})}(u) \geq V_{U}^{f_{\nu}(\tilde{\pi})}(u), \forall u \in U,
$$
which contradicts the assumption that $f(\tilde{\pi})$ is optimal in the abstract space.
Hence, $\tilde{\pi}$ must be an optimal policy in the ground MDP.  Thus, we prove that the existence of a homomorphic mapping necessarily implies optimal policy equivalence.

Finally, we establish the equivalence of the performance functions. According to the definition of the performance function, we have:
\begin{equation}\label{theorem2_proof_1}
  \begin{split}
    J_{\mathcal{S}}(\pi) & = \xi_{\mathcal{S}}^{\top} V^{\pi}_{\mathcal{S}} 
    \text{.}
  \end{split}
\end{equation}

Given the condition $\xi_{\mathcal{S}}^{\top} = \xi_{U}^{\top} P_{\nu}$, we have:
\begin{equation}\label{theorem2_proof_2}
  \begin{split}
    J_{\mathcal{S}}(\pi)  = \xi_{U}^{\top} P_{\nu} V^{\pi}_{\mathcal{S}} 
    \text{.}
  \end{split}
\end{equation}

According to the conclusion of Theorem \ref{theorem1}, $V_{U}^{\mu} = P_{\nu} V_{\mathcal{S}}^{\pi}$, then:
\begin{equation*}\label{theorem2_proof_3}
  \begin{split}
    J_{\mathcal{S}}(\pi) & =  \xi_{U}^{\top} V_{U}^{\mu}
    = J_{U}(f_{\nu}(\pi))
    \text{.}
  \end{split}
\end{equation*}

\end{proof}

Theorem \ref{theorem2} shows that if a homomorphic mapping $f_{\nu}: {\Pi}_{\mathcal{S}} \rightarrow {\Pi}_{U}$ exists, then $\mathcal{E} = \ \{\mathcal{M}_{\mathcal{S}}^{\pi} : \pi \in \Pi_{\mathcal{S}}\}$ and  $\mathcal{Q} = \{\mathcal{M}_{U}^{f(\pi)} : \pi \in \Pi_{\mathcal{S}}\}$ satisfies optimal policy equivalence. Moreover, for the other result concerning the performance function in Theorem \ref{theorem2}, the condition $\xi_{\mathcal{S}}^\top = \xi_{U}^\top P_{\nu}$ is readily satisfied. This is because, for any probability vector $x \in \mathbb{R}^{|\mathcal{S}|}$ and any policy $\pi \in \Pi_{\mathcal{S}}$, the inequality
$$
x^{\top} V_{\mathcal{S}}^{\pi^{*}} \geq x^{\top} V_{\mathcal{S}}^{\pi}
$$
always holds. Therefore, the choice of initial distribution $\xi_{\mathcal{S}}$ does not affect the optimal policy. In other words, for any encoding matrix $P_{\nu}$, as long as $\xi_{\mathcal{S}}^\top$ lies within the row space of $P_{\nu}$, there necessarily exists a $\xi_{U}^\top$ such that $\xi_{\mathcal{S}}^\top = \xi_{U}^\top P_{\nu}$.


\subsection{Characterizing the Existence of Homomorphic Mappings}

In the previous subsection, we established that the existence of a homomorphic mapping serves as a sufficient condition for optimal policy equivalence. Building on this result, the goal of the current subsection is to investigate the necessary and sufficient conditions for the existence of such a homomorphic mapping.

\begin{definition}\label{de1_3}
  For the ground MDP $\mathcal{M}_{\mathcal{S}}$ with state space \(\mathcal{S}\) and action space \(\mathcal{A}\), we define the elementary transition vectors as:
  \[
  \alpha_{i,j} := P_{\mathcal{S}\mathcal{A}}(\cdot|s_i,a_j) \in \mathbb{R}^{|\mathcal{S}|}, \quad  s_i \in \mathcal{S}, a_j \in \mathcal{A} \text{.}
  \]

  Let \(\mathcal{F} = \{\alpha^{(1)},...,\alpha^{(r)}\}\) denote a maximal linearly independent subset (basis) of \(\{\alpha_{i,j}:  \forall s_i \in \mathcal{S}, a_j \in \mathcal{A}\}\), where \(r = \text{rank}(\{\alpha_{i,j}\}) \leq |\mathcal{S}|\).
\end{definition}

\begin{theorem}\label{theorem4}

Given a ground MDP $\mathcal{M}_{\mathcal{S}}$ and encoding matrix $P_{\nu}$, the homomorphic mapping $f_{\nu}$ exists  \textbf{if and only if} the row space of $P_\nu$ contains $\text{span}(\mathcal{F})$. Under this condition, for any policy $\pi \in \Pi_{\mathcal{S}}$, $(U, P_\nu C^{\pi}, R^{\pi, \nu}_{\mathcal{S}}, \gamma)$ is the homomorphic Markov chain of ground Markov chain $P_{\mathcal{S}}^{\pi}$, where $C^{\pi} = P_{\mathcal{S}}^{\pi} P_{\nu}^{\dagger}$ and $P_{\nu}^{\dagger} = P_{\nu}^{\top} (P_{\nu}P_{\nu}^{\top})^{-1}$ is the Moore-Penrose pseudoinverse of $P_{\nu}$.

\end{theorem}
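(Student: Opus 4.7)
The plan is to prove the if-and-only-if in two directions, which simultaneously identifies the explicit construction claimed in the second part of the theorem.

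For the sufficient direction, I would propose $P_U^{f_\nu(\pi)} := P_\nu C^\pi = P_\nu P_\mathcal{S}^\pi P_\nu^\dagger$ as a candidate and check Definition~\ref{de1_2}. Each row of $P_\mathcal{S}^\pi$ is the convex combination $\sum_a \pi(a\mid s_i)\alpha_{i,a}$, so by the assumption $\text{span}(\mathcal{F}) \subseteq \text{rowspace}(P_\nu)$ every such row lives in $\text{rowspace}(P_\nu)$. The matrix $P_\nu^\dagger P_\nu = P_\nu^\top(P_\nu P_\nu^\top)^{-1} P_\nu$ is the orthogonal projector onto $\text{rowspace}(P_\nu)$ and therefore fixes those rows, giving $P_\mathcal{S}^\pi P_\nu^\dagger P_\nu = P_\mathcal{S}^\pi$ and hence $P_U^{f_\nu(\pi)} P_\nu = P_\nu P_\mathcal{S}^\pi$. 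The reward condition $R_U^{\pi,\nu} = P_\nu R_\mathcal{S}^\pi$ holds by construction. Finally, right-multiplying the transition identity by $\mathbf{1}$ and using $P_\mathcal{S}^\pi\mathbf{1} = \mathbf{1}$ and $P_\nu \mathbf{1} = \mathbf{1}$ yields $P_U^{f_\nu(\pi)} \mathbf{1} = \mathbf{1}$, so the candidate is row-stochastic. This already identifies $(U, P_\nu C^\pi, R^{\pi,\nu}_\mathcal{S}, \gamma)$ as the homomorphic abstract chain.

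For the necessary direction, I would fix an arbitrary generator $\alpha_{i,j}$ and compare two deterministic policies $\pi_1, \pi_2$ that agree everywhere except at $s_i$, with $\pi_1(s_i) = a_j$ and $\pi_2(s_i) = a_0$ for a reference action $a_0$. Subtracting the two homomorphism equations gives $P_\nu(P_\mathcal{S}^{\pi_1} - P_\mathcal{S}^{\pi_2}) = P_\nu e_i(\alpha_{i,j} - \alpha_{i,a_0})$, whose rows must all lie in $V := \text{rowspace}(P_\nu)$; hence $\nu(s_i\mid u)(\alpha_{i,j} - \alpha_{i,a_0}) \in V$ for every $u$, and picking any $u$ with $\nu(s_i\mid u) > 0$ yields $\alpha_{i,j} - \alpha_{i,a_0} \in V$. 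Combining this with the constant-policy constraint $\sum_i \nu(s_i\mid u)\alpha_{i,a_0} \in V$ and invoking the rank maximality of $\mathcal{F}$ in Definition~\ref{de1_3} lifts this to $\alpha_{i,j} \in V$, so $\text{span}(\mathcal{F}) \subseteq V$.

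The main obstacle I expect is the final extraction step in the necessity argument: isolating individual generators $\alpha_{i,j}$ from the $\nu$-weighted linear combinations that the homomorphism equations place into $V$ requires a non-degeneracy assumption, essentially that every ground state has positive mass under $\nu(\cdot\mid u)$ for at least one $u$. Without this, certain elementary transitions can be masked by the aggregation and the iff would need to be weakened to a statement about a quotient of $\text{span}(\mathcal{F})$. The rest of the argument reduces to routine pseudoinverse algebra exploiting $P_\nu^\dagger P_\nu$ as an orthogonal projector.
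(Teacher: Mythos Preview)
Your sufficiency argument is the paper's argument: both rest on $P_{\mathcal{S}}^{\pi} P_{\nu}^{\dagger} P_{\nu} = P_{\mathcal{S}}^{\pi}$ once every row of $P_{\mathcal{S}}^{\pi}$ lies in $\text{Row}(P_\nu)$. The paper phrases this through an explicit factorisation $P_{\mathcal{S}}^{\pi} = D P_{\nu}$ and then cancels $P_{\nu} P_{\nu}^{\top}(P_{\nu}P_{\nu}^{\top})^{-1}$, whereas you state it directly via the projector identity for $P_{\nu}^{\dagger}P_{\nu}$; these are the same computation. Your additional row-stochasticity check for $P_\nu C^\pi$ is a detail the paper leaves implicit.

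For necessity your route is more explicit than the paper's, which offers only a one-line assertion (``In order for $P_{U}^{\mu} P_{\nu}$ to match $P_{\nu} P_{\mathcal{S}}^{\pi}$, the row space of $P_\nu$ must span all possible linear combinations of $\{\alpha_{i,j}\}$'') with no mechanism for isolating individual generators. Your two-policy subtraction producing $\nu(s_i\mid u)(\alpha_{i,j}-\alpha_{i,a_0})\in V$ is the natural attempt, and the obstacle you flag is genuine and is \emph{not} resolved by the paper either: the homomorphism constraint only sees $\nu$-weighted combinations of the rows of $P_{\mathcal{S}}^{\pi}$, so transitions out of states with zero $\nu$-mass are invisible. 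In fact the difficulty persists even under your non-degeneracy hypothesis, since the step from $\sum_i \nu(s_i\mid u)\alpha_{i,a_0}\in V$ to each $\alpha_{i,a_0}\in V$ is not valid in general (e.g.\ take $|U|=1$, uniform $\nu$, and $\alpha_{i,a}=e_i$ for all $a$: then $P_U^{\mu}=[1]$ satisfies the homomorphism equation for every $\pi$, yet $\text{span}(\mathcal{F})=\mathbb{R}^{|\mathcal{S}|}\not\subseteq V$). The appeal to ``rank maximality of $\mathcal{F}$'' does not rescue this extraction. In short, you have correctly located a gap that the paper's proof glosses over; the iff as stated appears to require hypotheses beyond what either you or the paper supply.
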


\begin{proof}We begin by proving the necessary and sufficient condition.
  
  Necessity:
  Assume that $P_{U}^{\mu}$ exists such that $P_{U}^{\mu} P_{\nu} = P_{\nu} P_{\mathcal{S}}^{\pi}$ holds for all $\pi$. Note that each $P_{\mathcal{S}}^{\pi}$ is a convex combination of the transition vectors $\{\alpha_{i,j}\}$. Hence, the set of all such products $P_{\nu} P_{\mathcal{S}}^{\pi}$ lies within the projection of the linear combinations of $\{\alpha_{i,j}\}$ under $P_\nu$. In order for $P_{U}^{\mu} P_{\nu}$ to match $P_{\nu} P_{\mathcal{S}}^{\pi}$, the row space of $P_\nu$ must span all possible linear combinations of $\{\alpha_{i,j}\}$, or at least a basis of them - i.e., $\mathcal{F}$.
  Thus, $\operatorname{Row}(P_\nu)$ must contain $\operatorname{span}(\mathcal{F})$.
  
  Sufficiency:
  Assume $\operatorname{Row}(P_\nu) \supseteq \operatorname{span}(\mathcal{F})$. Then, for any policy $\pi$, its induced transition matrix $P_{\mathcal{S}}^{\pi}$ can be written as a linear combination of $\mathcal{F}$, and hence any column $P_{\mathcal{S}}^{\pi} v$, for $v \in \mathbb{R}^{|\mathcal{S}|}$, lies within $\operatorname{span}(\mathcal{F})$.
  Since $P_\nu$ acts on this space (and includes it in its row space), for any such $\pi$, there exists a linear operator $P_U^{\mu}$ defined on the abstract space such that:

  \begin{equation}\label{theorem_MPP_add_6}
    \begin{split}
      P_U^{\mu} P_{\nu} = P_{\nu} P_{\mathcal{S}}^{\pi}
      \text{.}
  \end{split}
  \end{equation}
  That is, the dynamics under $P_{\mathcal{S}}^{\pi}$ can be lifted through $P_\nu$ via a corresponding abstract dynamics $P_U^{\mu}$.

As a result, there exists a matrix $C^{\pi} \in \mathbb{R}^{|\mathcal{S}| \times |U|}$ such that $C^{\pi} P_\nu = P^\pi_\mathcal{S}$. Therefore, we next derive a closed-form solution for $C^{\pi}=P_{\mathcal{S}}^{\pi} P_{\nu}^{\dagger}$.
To verify this result, we substitute $C^{\pi} = P_{\mathcal{S}}^{\pi} P_{\nu}^{\dagger}$ into Equation \eqref{theorem_MPP_add_4}, yielding:
\begin{equation}\label{theorem_MPP_add_3}
  \begin{split}
    P_{U}^{f_{\nu}(\pi)} P_\nu &= P_\nu (C^{\pi} P_\nu)
    \\ & = 
    P_\nu P_{\mathcal{S}}^{\pi} P_{\nu}^{\dagger} P_\nu
    \\ & =
    P_\nu P_{\mathcal{S}}^{\pi} P_{\nu}^{\top} (P_{\nu}P_{\nu}^{\top})^{-1} P_\nu
    \text{.}
\end{split}
\end{equation}
Since the row space of $P_\nu$ contains $\text{span}(\mathcal{F})$, there must exist a matrix $D$ such that $P_{\mathcal{S}}^{\pi} = D P_{\nu}$. Substituting this into the above equation yields:
\begin{equation}\label{theorem_MPP_add_4}
  \begin{split}
    P_{U}^{f_{\nu}(\pi)} P_\nu &= P_\nu D P_{\nu} P_{\nu}^{\top} (P_{\nu}P_{\nu}^{\top})^{-1} P_\nu
    \\ & =
    P_\nu D P_{\nu}
    \\ & = P_\nu P_{\mathcal{S}}^{\pi}
    \text{.}
\end{split}
\end{equation}

According to the conclusion of Theorem \ref{theorem1}, since $R_{U}^{\pi, \nu} = P_{\nu} R_{\mathcal{S}}^{\pi}$ and $P_{U}^{f_{\nu}(\pi)} P_\nu = P_\nu P_{\mathcal{S}}^{\pi}$, it follows that the homomorphic Markov chain of ${\mathcal{M}}_{\mathcal{S}}^{\pi}$ is $(U, P_\nu C^{\pi}, R^{\pi, \nu}_{\mathcal{S}}, \gamma)$.
Combining the fact that every encoding Markov chain admits a corresponding homomorphic Markov chain with the definition of a homomorphic mapping, we conclude that
the row space of $P_\nu$ containing $\text{span}(\mathcal{F})$
is the sufficient and necessary condition for the existence of a homomorphic mapping in the ground MDP.

\end{proof}

\textbf{Finally, we clarify why the sufficient condition in Theorem \ref{theorem4} is more concise and general than the condition (Equation \eqref{add2_1})  presented by Shoshtari et al. \cite{JMLR:v25:23-1415}.} From a definitional standpoint, a homomorphic Markov chain requires only that the transition probabilities be linearly dependent, whereas a homomorphic MDP, as defined in Equation \eqref{add2_1}, requires these probabilities to be exactly equal. This indicates that the structural constraint imposed by Equation \eqref{add2_1} is strictly stronger.

Moreover, Theorem \ref{theorem4} states that the number of abstract states $|U|$ need only be no less than the number of basis functions in $\overline{\mathcal{N}^\nu}$, without requiring one-to-one correspondence with distinct transition distributions.
As a concrete example, suppose that for all $s \in \mathcal{S}$ and $a \in \mathcal{A}$, the transition probabilities $P_{\mathcal{S}\mathcal{A}}(\cdot \mid s, a)$ can be written as convex combinations of two distinct distributions $k_1(\cdot), k_2(\cdot) \in \Delta(\mathcal{S})$. Without loss of generality, assume there exists some $(s_0, a_0)$ such that
\begin{equation}
\begin{split}
P(\cdot \mid s_0, a_0) &= k_3(\cdot) \\
&= 0.5\cdot k_1(\cdot) + 0.5\cdot k_2(\cdot),
\end{split}
\end{equation}
where $k_3 \in \Delta(\mathcal{S})$ and $k_3 \neq k_1, k_2$. According to Theorem \ref{theorem4}, it suffices to define the abstract transition function $v(\cdot \mid u)$ using only the two basis distributions $k_1$ and $k_2$, so that optimal policy equivalence holds even when $|U| = 2$.

In contrast, under the homomorphic MDP condition in Equation \eqref{add2_1}, the mapping $g(s)$ must assign different abstract states to each of $k_1$, $k_2$, and $k_3$, since these represent distinct transition behaviors. This implies that $|U| \geq 3$ is required in that case. Therefore, this example highlights that the sufficient condition in Theorem \ref{theorem4} imposes strictly fewer structural constraints than the homomorphic MDP definition of Shoshtari et al. \cite{JMLR:v25:23-1415}, and is thus both more general and more compact.

\section{Policy Optimization and Performance Analysis under Homomorphic Mapping}\label{sec_3}
In the previous section, we presented a sufficient condition for optimal policy equivalence, which is more compact than the result established by Shoshtari et al. \cite{JMLR:v25:23-1415}. In this subsection, we further explore how to leverage optimal policy equivalence to optimize policies, as well as how to improve policy performance when the sufficient condition is not satisfied.

\begin{algorithm}[tb]
  \caption{: Homomorphic Policy Gradient Algorithm (HPG)}
  \label{alg_HM}
  \begin{algorithmic}[2]
      \STATE Initial the policy ${{\theta}^{0}}$ and $P_{\nu}$
      \STATE According $P_{\mathcal{S}\mathcal{A}}$ calculating $P_{\nu}^{\dagger}$
      \REPEAT
      \STATE $ C^{\pi_{{\theta}^{t}}} = P_{\mathcal{S}}^{\pi_{{\theta}^{t}}} P_{\nu}^{\dagger}$
      \STATE $V_{U}^{f_{\nu}(\pi_{{\theta}^{t}})} =  (I-\gamma P_{\nu} C^{\pi_{{\theta}^{t}}})^{-1} P_\nu R_{\mathcal{S}}^{\pi_{{\theta}^{t}}}$
      \STATE ${\theta}^{t+1} = \arg\max_{\theta^{t}} V_{U}^{f_{\nu}(\pi_{\theta^{t}})}$
      \STATE $t = t + 1$.
      \UNTIL{$\pi_{{\theta}^{t+1}}$ is optimal.}
      \STATE Return $\pi_{{\theta}}^{*}$
  \end{algorithmic}
\end{algorithm}

\subsection{Optimizing Policies in the Abstract Space with Homomorphic Mappings}

According to the results of Theorems \ref{theorem2} and \ref{theorem4}, if the row space of $P_\nu$ contains $\text{span}(\mathcal{F})$, then optimal policy equivalence holds. In other words, 

\begin{equation}\label{HMASA_1_1}
  \begin{split}
    \pi^{*} = \arg\max_{\pi \in \Pi_{\mathcal{S}}} V_{\mathcal{S}}^{\pi} \equiv \arg\max_{\pi \in \Pi_{\mathcal{S}}} V_{U}^{f_{\nu}(\pi)}\text{.}
\end{split}
\end{equation}

Based on the above conclusion, we propose Algorithm \ref{alg_HM}. Algorithm \ref{alg_HM} demonstrates how to optimize a ground MDP policy via the encoding matrix. First, compute the pseudoinverse of $P_{\nu}^{\dagger}$ using matrix $P_{\mathcal{S}\mathcal{A}}$. Then, following the policy iteration procedure, iteratively evaluate the value function and improve the policy. Due to the validity of optimal policy equivalence, this process ultimately converges to the optimal policy of the ground MDP \cite{bertsekas2012dynamic}.

We next analyze the computational complexity of Algorithm \ref{alg_HM}. According to standard matrix computation methods, the computational complexity of calculating the Moore-Penrose pseudoinverse of an $m \times n$ matrix is $\mathcal{O}(m n^2)$ (The most commonly used is the Singular Value Decomposition (SVD) method). For the inversion of an $n \times n$ matrix, the computational complexity of Gaussian elimination is $\mathcal{O}(n^3)$. For Algorithm \ref{alg_HM}, the computational complexity of each policy evaluation (step 4-5) is $\mathcal{O}(|\mathcal{S}||\mathcal{A}| + |U||\mathcal{S}|^{2} + |U|^3)$. In contrast, for standard policy evaluation in the ground MDP, the per-iteration complexity is $\mathcal{O}(|\mathcal{S}||\mathcal{A}| + |\mathcal{S}|^3)$. Clearly, Algorithm \ref{alg_HM} is computationally more efficient when $|U| \ll|\mathcal{S}|$.


We next investigate how to optimize the policy using Equation \eqref{HMASA_1_1}. A straightforward approach is to apply policy gradient methods. Accordingly, we derive the policy gradient in the abstract space for optimizing the ground MDP policy.

\begin{theorem}[Homomorphic Policy Gradient Theorem]\label{new_theorem4}
  The gradient of the corresponding value function $V^{f_{\nu}(\pi_\theta)}_{U}(u)$ with respect to the parameter $\theta$ is given by:
  \begin{equation}\label{HMASA_1_2}
    \begin{split}
      \nabla_{\theta} &V^{f_{\nu}(\pi_\theta)}_{U}(u) 
      \\ & = \mathbb{E}_{X_{t} \sim \eta(\cdot|f(\pi_\theta), u), S_{t} \sim \nu(\cdot|X_{t}), A_{t} \in \pi_\theta(\cdot|S_{t})} \Big[  \nabla_{\theta} \ln \pi_\theta(A_{t}|S_{t})
      \\ &  \cdot
      \big [r(S_{t},A_{t}) + \gamma \mathbb{E}_{Y \sim P_{1}(\cdot | S_{t}, A_{t})} [V^{f_{\nu}(\pi_\theta)}_{U}(Y)] \big] \Big] \text{,}
  \end{split}
  \end{equation}
  where $ \eta(x|u, f(\pi_\theta)) = \sum_{t=0}^{\infty} \gamma^{t} P(X_t = x| X_0 = u, \pi_\theta)$ and $P_{1}(u' | s, a) = \sum_{s' \in \mathcal{S}} P_{\mathcal{S}\mathcal{A}}(s'|s,a) P_{\nu}^{\dagger}(u'|s') $. 
\end{theorem}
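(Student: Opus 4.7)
The plan is to mimic the classical policy gradient derivation, but applied directly to the abstract value function $V^{f_{\nu}(\pi_\theta)}_{U}$, whose recursive structure inherits cleanly from Theorems \ref{theorem1} and \ref{theorem4}. The central observation that makes the argument go through is that, after combining the abstract reward and the abstract transition inside the abstract Bellman operator, all dependence on $\theta$ can be factored through a single $\pi_\theta(a\mid s)$, with $P_{1}(u'\mid s,a)$ absorbing everything that is $\theta$-independent. Once that factorization is in place, differentiation and unrolling proceed exactly as in the standard proof.

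\textbf{Step 1 — Abstract Bellman equation, differentiation, unrolling.} Starting from $V^{f_{\nu}(\pi_\theta)}_{U} = R^{\pi_\theta,\nu}_{U} + \gamma\, P^{f_{\nu}(\pi_\theta)}_{U} V^{f_{\nu}(\pi_\theta)}_{U}$, I expand $R^{\pi_\theta,\nu}_{U} = P_\nu R^{\pi_\theta}_{\mathcal{S}}$ and substitute $P^{f_{\nu}(\pi_\theta)}_{U} = P_\nu P^{\pi_\theta}_{\mathcal{S}} P_{\nu}^{\dagger}$ from Theorem \ref{theorem4}. Expanding $P^{\pi_\theta}_{\mathcal{S}}(s'\mid s) = \sum_a \pi_\theta(a\mid s) P_{\mathcal{S}\mathcal{A}}(s'\mid s,a)$ and regrouping yields
$$V^{f_{\nu}(\pi_\theta)}_{U}(u) = \sum_{s}\nu(s\mid u)\sum_{a}\pi_\theta(a\mid s)\Big[r(s,a) + \gamma\sum_{u'} P_{1}(u'\mid s,a)\, V^{f_{\nu}(\pi_\theta)}_{U}(u')\Big],$$
with $P_{1}$ exactly as defined in the theorem. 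Denoting the bracketed expression by $Q^{f_{\nu}(\pi_\theta)}_{U}(s,a)$, applying $\nabla_\theta$ together with $\nabla_\theta \pi_\theta = \pi_\theta \nabla_\theta \ln \pi_\theta$, and noting that the only $\theta$-dependence inside $Q^{f_{\nu}(\pi_\theta)}_{U}$ comes from $V^{f_{\nu}(\pi_\theta)}_{U}$ (because $P_{1}$ is $\theta$-independent), I obtain the recursion
$$\nabla_\theta V^{f_{\nu}(\pi_\theta)}_{U}(u) = \Phi(u) + \gamma\sum_{u'} P^{f_{\nu}(\pi_\theta)}_{U}(u'\mid u)\,\nabla_\theta V^{f_{\nu}(\pi_\theta)}_{U}(u'),$$
where $\Phi(u) = \sum_{s}\nu(s\mid u)\sum_{a}\pi_\theta(a\mid s)\,\nabla_\theta \ln \pi_\theta(a\mid s)\,Q^{f_{\nu}(\pi_\theta)}_{U}(s,a)$. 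Iterating this identity and collapsing $\sum_{t\geq 0}\gamma^{t} P(X_{t}=\cdot\mid X_{0}=u,\pi_\theta)$ into $\eta(\cdot\mid f(\pi_\theta),u)$ rewrites the series as the nested expectation stated in the theorem.

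\textbf{Main obstacle.} The most delicate point is the factorization in Step 1: confirming that the $P_{\nu}^{\dagger}$ factor sitting on the right of $P^{f_{\nu}(\pi_\theta)}_{U} = P_\nu P^{\pi_\theta}_{\mathcal{S}} P_{\nu}^{\dagger}$ can be absorbed into a kernel $P_{1}(u'\mid s,a)$ that depends only on $(s,a)$ and not on $u$. This is where the clean "sample $S_{t}\sim \nu(\cdot\mid X_{t})$, then $A_{t}\sim \pi_\theta(\cdot\mid S_{t})$, then $Y\sim P_{1}(\cdot\mid S_{t},A_{t})$" structure of the gradient originates. A secondary subtlety is that $P_{\nu}^{\dagger}$ need not have nonnegative entries, so $P_{1}$ is in general only a signed kernel rather than a probability distribution, and the final expression uses $\mathbb{E}_{Y\sim P_{1}}[\cdot]$ in a purely formal sense; one must therefore verify that the interchange of $\nabla_\theta$ with the infinite discounted sum is legitimate. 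This is guaranteed by the convergence argument already used in the proof of Theorem \ref{theorem1} (via Lemma \ref{lemma1}), which ensures absolute convergence of the Neumann series for $(I-\gamma P^{f_{\nu}(\pi_\theta)}_{U})^{-1}$. Once these points are settled, the remainder of the derivation is a direct transcription of the classical policy gradient proof.
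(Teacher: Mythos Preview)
Your proposal is correct and follows essentially the same route as the paper: expand the abstract Bellman equation via $P^{f_{\nu}(\pi_\theta)}_{U} = P_\nu P^{\pi_\theta}_{\mathcal{S}} P_{\nu}^{\dagger}$, factor the $\theta$-dependence through $\pi_\theta(a\mid s)$ with the kernel $P_{1}$, apply the log-derivative trick, and unroll the resulting recursion into the discounted occupancy $\eta$. Your explicit remark that $P_{1}$ is only a signed kernel and that the series interchange is justified by Lemma~\ref{lemma1} is a nice addition not spelled out in the paper.
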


\begin{proof}

According to the definition of the value function, we have:
\begin{equation}\label{APG_1_1_a}
  \begin{split}
    \nabla_{\theta} &V^{f_{\nu}(\pi_\theta)}_{U}(u) 
    \\ & = \nabla_{\theta} [R_{\mathcal{S}}^{\pi_\theta, \nu}(u) + \gamma \sum_{u'  \in U} P_{U}^{f_{\nu}(\pi_\theta)}(u'|u) V^{f_{\nu}(\pi_\theta)}_{U}(u')]
    \\ & = \nabla_{\theta}  \big[\sum_{u \in U, s \in \mathcal{S}} \nu(s|u)\pi_\theta(a|s)r(s,a) 
    \\ &  + \gamma \sum_{u'  \in U} P_{U}^{f_{\nu}(\pi_\theta)}(u'|u) V^{f_{\nu}(\pi_\theta)}_{U}(u') \big] \text{.}
\end{split}
\end{equation}

Substituting Equation \eqref{value_d_2} into the above expression, we obtain:
\begin{equation}\label{APG_1_1_b}
  \begin{split}
    \nabla_{\theta} &V^{f_{\nu}(\pi_\theta)}_{U}(u) 
    \\ & = \nabla_{\theta} \sum_{u \in U, s \in \mathcal{S}} \nu(s|u)\pi_\theta(a|s)r(s,a)  
    \\ & + \gamma \sum_{u' \in U} \nabla_{\theta} (P_\nu P_{\mathcal{S}}^{\pi_\theta} P_{\nu}^{\dagger})(u'|u) V^{f_{\nu}(\pi_\theta)}_{U}(u')
    \\ & + \gamma \sum_{u' \in U} (P_\nu P_{\mathcal{S}}^{\pi_\theta} P_{\nu}^{\dagger})(u'|u) \nabla_{\theta}  V^{f_{\nu}(\pi_\theta)}_{U}(u')
    \\ & =  \sum_{s \in \mathcal{S}, a \in \mathcal{A}} [\nu(s|u) \pi_\theta(a|s) r(s,a) \nabla_{\theta} \ln \pi_\theta(a|s)]
    \\ & + \gamma  \sum_{s, s' \in \mathcal{S}, a \in \mathcal{A}, u' \in U} \big [\nu(s|u) \pi_\theta(a|s) P_{\mathcal{S}\mathcal{A}}(s'|s,a) P_{\nu}^{\dagger}(u'|s') 
    \\ & \cdot \nabla_{\theta} \ln \pi_\theta(a | s) V^{f_{\nu}(\pi_\theta)}_{U}(u') \big]
    \\ & + \gamma  \sum_{u' \in U} P_{U}^{f_{\nu}(\pi_\theta)}(u'|u) \nabla_{\theta}  V^{f_{\nu}(\pi_\theta)}_{U}(u').
\end{split}
\end{equation}

Let $P_{1}(u' | s, a) = \sum_{s' \in \mathcal{S}} P_{\mathcal{S}\mathcal{A}}(s'|s,a) P_{\nu}^{\dagger}(u'|s') $. Following this pattern, we obtain:
\begin{equation}\label{APG_1_2}
  \begin{split}
    & =  \sum_{t=0}^{T} \gamma^{t} \sum_{x \in U} \Big[  p(u \rightarrow x, k, \pi_\theta) 
    \\ & \cdot \mathbb{E}_{S_{t} \sim \nu(\cdot|x), A_{t} \in \pi_\theta(|S_{t})} [r(S_{t},A_{t}) \nabla_{\theta} \ln \pi_\theta(A_{t}|S_{t})]
    \\ & + \gamma  \mathbb{E}_{S_{t} \sim \nu(\cdot|x), A_{t} \in \pi_\theta(\cdot|S_{t}), Y \sim P_{1}(\cdot | S_{t}, A_{t})} \big [  \nabla_{\theta} \ln \pi_\theta(A_{t}|S_{t})
    \\ & \cdot V^{f_{\nu}(\pi_\theta)}_{U}(Y) \big] \Big] 
    \\ & = \sum_{x \in U} \eta(x|f(\pi_\theta), u) \mathbb{E}_{S_{t} \sim \nu(\cdot|x), A_{t} \in \pi_\theta(\cdot|S_{t}), Y \sim P_{1}(\cdot | S_{t}, A_{t})} \Big[  
    \\ & \nabla_{\theta} \ln \pi_\theta(A_{t}|S_{t})
    \big [r(S_{t},A_{t}) + \gamma  V^{f_{\nu}(\pi_\theta)}_{U}(Y) \big] \Big]
    \\ &= \mathbb{E}_{X_{t} \sim \eta(\cdot|f(\pi_\theta), u), S_{t} \sim \nu(\cdot|X_{t}), A_{t} \in \pi_\theta(\cdot|S_{t})} \Big[  \nabla_{\theta} \ln \pi_\theta(A_{t}|S_{t})
    \\ &  \cdot
    \big [r(S_{t},A_{t}) + \gamma \mathbb{E}_{Y \sim P_{1}(\cdot | S_{t}, A_{t})} [V^{f_{\nu}(\pi_\theta)}_{U}(Y)] \big] \Big] \text{.}
\end{split}
\end{equation}

\end{proof}

Overall, this subsection explores how to optimize policies through homomorphism mapping. We propose the Homomorphic Policy Gradient (HPG) algorithm, which leverages this structure to improve computational efficiency. We also derive the policy gradient in the abstract space, enabling gradient-based optimization of the ground policy via its abstract representation.

\subsection{Error Analysis and Performance Guarantees under Condition Violation}

\begin{figure}[!h]
  \centering \includegraphics[width=\columnwidth]{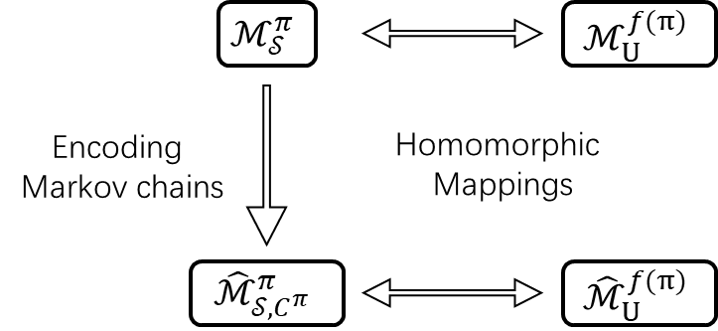}
  \caption{This figure illustrates the relationship between the ground Markov chain, encoding Markov chains, and the homomorphic Markov chain.
  In general, encoding Markov chains may exhibit discrepancies relative to the ground Markov chain. However, there always exists a homomorphic Markov chain corresponding to any encoding Markov chain.   Therefore, encoding Markov chains can serve as a critical bridge connecting the ground MDP and homomorphic mappings.
  }
  \label{Figure1}
\end{figure}

In the previous subsection, we examined homomorphisms and state aggregation under idealized assumptions. In this subsection, we investigate how to utilize homomorphic mappings to optimize policies when the row space of $P_\nu$ does not contain $\text{span}(\mathcal{F})$. Clearly, in the absence of this condition, optimal policy equivalence no longer holds, implying the introduction of value function approximation errors. Accordingly, we first derive an upper bound on the performance gap and then provide a lower bound on the performance of the policy in the ground MDP.

\begin{definition}[Encoding Markov Chain]
  Given a ground MDP, an encoding matrix $P_{\nu}$, a matrix $C^{\pi}\in\mathbb{R}^{|\mathcal{S}|\times|U|}$, and a policy $\pi \in \Pi_\mathcal{S}$, we define $\hat{\mathcal{M}}_{\mathcal{S}, \nu}^{\pi}=(\mathcal{S}, C^{\pi}P_{\nu},R_{\mathcal{S}}^{\pi},\gamma)$ as an encoding Markov chain of ground Markov chain ${\mathcal{M}}_{\mathcal{S}}^{\pi}$, where $C^{\pi}=P_{\mathcal{S}}^{\pi} P_{\nu}^{\dagger}$. 
\end{definition}

As illustrated in Figure \ref{Figure1}, for each ground Markov chain, we can associate an encoding Markov chain that approximates it with some error. We view the encoding Markov chain as a bridge connecting the ground Markov chain to a potential homomorphic Markov chain.
For any ground Markov chain ${\mathcal{M}}_{\mathcal{S}}^{\pi} = (\mathcal{S}, P_{\mathcal{S}}^{\pi}, R^{\pi}_{\mathcal{S}}, \gamma)$ and its corresponding encoding Markov chain $\hat{\mathcal{M}}_{\mathcal{S}, \nu}^{\pi} = (\mathcal{S}, C^{\pi} P_\nu, R^{\pi}_{\mathcal{S}}, \gamma)$. The homomorphic Markov chain induced by the encoding matrix is denoted as ${\mathcal{M}}_{U}^{f_{\nu}(\pi)} = (U, P_\nu C^{\pi}, R^{\pi, \nu}_{\mathcal{S}}, \gamma)$.  Following the previous notation, we denote the value functions of $\hat{\mathcal{M}}_{\mathcal{S}, \nu}^{\pi}$  as $\hat{V}_{\mathcal{S},\nu}^{\pi}$ and $V_{U}^{f_{\nu}(\pi)} = P_{\nu} \hat{V}_{\mathcal{S},\nu}^{\pi}$.

\begin{theorem}[Policy Optimization Lower Bound Theorem]\label{POLB} $\\$
Assume there exists a initial state distribution $\xi_{U}$ such that $\xi_{U}^{\top} P_{\nu} = \xi_{\mathcal{S}}^{\top}$.
The lower bound on policy performance in the ground MDP satisfies:
\begin{equation}\label{polbt_1}
  \begin{split}
    J_{\mathcal{S}}(\tilde{\pi}) \geq J_{U}(f_{\nu}(\tilde{\pi})) - \frac{\mathcal{k}g(\tilde{\pi}, \nu ) \mathcal{k}}{ 1 - \gamma} \text{,}
  \end{split}
\end{equation}
where  $\frac{\mathcal{k} g(\pi, \nu ) \mathcal{k}}{1 - \gamma} = \frac{\mathcal{k} P_{\nu} P_{\mathcal{S}}^{\pi} \hat{V}_{\mathcal{S},\nu}^{\pi} - P_{U}^{f_{\nu}(\pi)} {V}_{U}^{f_{\nu}(\pi)} \mathcal{k} }{ 1 - \gamma } $ is the upper bound on the performance discrepancy between policy $\pi$ in the ground MDP and its image $f(\pi)$ in the abstract space.

\end{theorem}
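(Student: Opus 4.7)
The plan is to first reduce the performance gap $J_{\mathcal{S}}(\tilde\pi) - J_{U}(f_{\nu}(\tilde\pi))$ to a weighted difference between the ground and encoding value functions, and then control that difference by the one-step residual $g(\tilde\pi,\nu)$ via the Neumann-series argument already used in Theorem \ref{theorem1}. The starting observation is that $\mathcal{M}_{U}^{f_{\nu}(\pi)}$ is homomorphic to the encoding Markov chain $\hat{\mathcal{M}}_{\mathcal{S},\nu}^{\pi}$ by construction, because
\[
P_{U}^{f_{\nu}(\pi)} P_{\nu} = (P_{\nu} C^{\pi}) P_{\nu} = P_{\nu} (C^{\pi} P_{\nu}), \qquad R_{U}^{\pi,\nu} = P_{\nu} R_{\mathcal{S}}^{\pi}.
\]
Applying Theorem \ref{theorem1} to this pair gives $V_{U}^{f_{\nu}(\pi)} = P_{\nu} \hat V_{\mathcal{S},\nu}^{\pi}$. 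Combined with the hypothesis $\xi_{U}^{\top} P_{\nu} = \xi_{\mathcal{S}}^{\top}$, this yields $J_{U}(f_{\nu}(\pi)) = \xi_{U}^{\top} P_{\nu} \hat V_{\mathcal{S},\nu}^{\pi} = \xi_{\mathcal{S}}^{\top} \hat V_{\mathcal{S},\nu}^{\pi}$, so the gap reduces to $J_{\mathcal{S}}(\tilde\pi) - J_{U}(f_{\nu}(\tilde\pi)) = \xi_{\mathcal{S}}^{\top}\bigl(V_{\mathcal{S}}^{\tilde\pi} - \hat V_{\mathcal{S},\nu}^{\tilde\pi}\bigr)$.

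Next I would analyze $\Psi := V_{\mathcal{S}}^{\pi} - \hat V_{\mathcal{S},\nu}^{\pi}$ through the two Bellman equations, which share the reward $R_{\mathcal{S}}^{\pi}$ but carry transition kernels $P_{\mathcal{S}}^{\pi}$ and $C^{\pi} P_{\nu}$. Subtracting and rearranging yields
\[
\Psi = \gamma P_{\mathcal{S}}^{\pi} \Psi + \gamma\bigl(P_{\mathcal{S}}^{\pi} - C^{\pi} P_{\nu}\bigr) \hat V_{\mathcal{S},\nu}^{\pi},
\]
and Lemma \ref{lemma1} inverts this to $\Psi = \gamma (I - \gamma P_{\mathcal{S}}^{\pi})^{-1}(P_{\mathcal{S}}^{\pi} - C^{\pi} P_{\nu}) \hat V_{\mathcal{S},\nu}^{\pi}$. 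The crucial algebraic identity is that left-multiplication by $P_{\nu}$ turns the residual into $g$:
\[
P_{\nu}\bigl(P_{\mathcal{S}}^{\pi} - C^{\pi} P_{\nu}\bigr) \hat V_{\mathcal{S},\nu}^{\pi} = P_{\nu} P_{\mathcal{S}}^{\pi} \hat V_{\mathcal{S},\nu}^{\pi} - P_{U}^{f_{\nu}(\pi)} V_{U}^{f_{\nu}(\pi)} = g(\pi,\nu).
\]
Substituting $\xi_{\mathcal{S}}^{\top} = \xi_{U}^{\top} P_{\nu}$ at the leading order of the Neumann expansion allows me to replace the ground-side residual by $\xi_{U}^{\top} g$, which can be bounded by Cauchy-Schwarz: $|\xi_{U}^{\top} g| \le \|\xi_{U}\|_{2} \|g\| \le \|g\|$, since $\|\xi_{U}\|_{2} \le \|\xi_{U}\|_{1} = 1$. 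Summing the resulting geometric factor $\sum_{t \ge 0} \gamma^{t} = 1/(1-\gamma)$ gives $|J_{\mathcal{S}}(\tilde\pi) - J_{U}(f_{\nu}(\tilde\pi))| \le \|g(\tilde\pi,\nu)\|/(1-\gamma)$, and dropping the absolute value on the lower side yields the claimed inequality.

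The main obstacle I anticipate is the clean propagation of the factor $P_{\nu}$ through the full Neumann series: at the leading order the substitution $\xi_{\mathcal{S}}^{\top} = \xi_{U}^{\top} P_{\nu}$ brings $P_{\nu}$ directly next to $(P_{\mathcal{S}}^{\pi} - C^{\pi} P_{\nu}) \hat V_{\mathcal{S},\nu}^{\pi}$, but at higher orders one is left with $\xi_{\mathcal{S}}^{\top} (P_{\mathcal{S}}^{\pi})^{t}$, whose rows need not lie in the row space of $P_{\nu}$ unless the sufficient condition of Theorem \ref{theorem4} is met. Closing this gap will likely require either iterating the single-step decomposition inductively while absorbing each step's excess into the geometric factor via the contractivity $\|\gamma P_{\mathcal{S}}^{\pi}\| \le \gamma$, or rebounding in an operator norm that respects the stochastic structure of $P_{\mathcal{S}}^{\pi}$ and $P_{\nu}$; everything else in the argument is Bellman bookkeeping and sign rearrangement to arrive at the stated lower bound.
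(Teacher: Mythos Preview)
Your approach is essentially the paper's. Both reduce $J_{\mathcal{S}}(\pi)-J_{U}(f_{\nu}(\pi))$ to $\xi_{\mathcal{S}}^{\top}\bigl(V_{\mathcal{S}}^{\pi}-\hat V_{\mathcal{S},\nu}^{\pi}\bigr)$, express this via a resolvent difference (the paper writes the identity $(I-A)^{-1}-(I-B)^{-1}=(I-A)^{-1}(A-B)(I-B)^{-1}$ explicitly, which is your Bellman subtraction in closed form), recognise $(I-\gamma C^{\pi}P_{\nu})^{-1}R_{\mathcal{S}}^{\pi}=\hat V_{\mathcal{S},\nu}^{\pi}$, and then invoke the identification $P_{\nu}C^{\pi}P_{\nu}\hat V_{\mathcal{S},\nu}^{\pi}=P_{U}^{f_{\nu}(\pi)}V_{U}^{f_{\nu}(\pi)}$ to produce $g(\pi,\nu)$; the final lower bound then drops out of $|J_{\mathcal{S}}-J_{U}|\le \|g\|/(1-\gamma)$ exactly as you write.

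On the obstacle you flag, the paper proceeds precisely as your leading-order computation suggests and simply iterates it: from $\xi_{U}^{\top}P_{\nu}\sum_{k\ge 0}(\gamma P_{\mathcal{S}}^{\pi})^{k}$ it passes directly to $\xi_{U}^{\top}\sum_{k\ge 0}(\gamma P_{U}^{f(\pi)})^{k}P_{\nu}$, then bounds $\bigl\|\xi_{U}^{\top}(P_{U}^{f(\pi)})^{k}\bigr\|\le 1$ termwise and sums the geometric factor. The commutation $P_{\nu}P_{\mathcal{S}}^{\pi}=P_{U}^{f(\pi)}P_{\nu}$ used at this step is, as you correctly observe, exactly the homomorphic relation of Definition~\ref{de1_2}: it holds automatically between $P_{U}^{f(\pi)}$ and the \emph{encoding} transition $C^{\pi}P_{\nu}$, but not in general between $P_{U}^{f(\pi)}$ and the ground transition $P_{\mathcal{S}}^{\pi}$ once $\mathrm{Row}(P_{\nu})\not\supseteq\mathrm{span}(\mathcal{F})$. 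The paper does not supply a further argument here, so your instinct to seek an alternative route---e.g.\ bounding term by term through the stochastic contractivity $\|\xi_{\mathcal{S}}^{\top}(P_{\mathcal{S}}^{\pi})^{k}\|\le 1$ and controlling the ground-level residual directly---is a reasonable way to make the step rigorous, even though it is not what the paper records.
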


\begin{proof}
Our proof proceeds as follows:

We first show that $  \mathcal{k} J_{\mathcal{S}}(\pi) - J_{U}(f_{\nu}(\pi)) \mathcal{k} \leq \frac{\mathcal{k} g(\pi, \nu ) \mathcal{k}}{1 - \gamma}$. 
Based on the ground Markov chain and its corresponding encoding Markov chain, we have:
  \begin{equation}\label{HMASA_2_1}
    \begin{split}
      \mathcal{k} J_{\mathcal{S}} ({\pi})  & - J_{U} ({f_{\nu}(\pi)} ) \mathcal{k}  
      \\ & = \mathcal{k} \xi_{\mathcal{S}}^{\top} V_{\mathcal{S}}^{\pi} - \xi_{U}^{\top} \hat{V}_{U}^{f_{\nu}{\pi}}  \mathcal{k}
      \\ & = \mathcal{k} \xi_{\mathcal{S}}^{\top} V_{\mathcal{S}}^{\pi} - \xi_{U}^{\top} P_\nu \hat{V}_{\mathcal{S},\nu}^{\pi}  \mathcal{k}
      \\ & = \mathcal{k}  \xi_{\mathcal{S}}^{\top} [(I - \gamma P_{\mathcal{S}}^{\pi} )^{-1} - (I - \gamma C^{\pi} P_\nu)^{-1} ]R_{\mathcal{S}}^{\pi} \mathcal{k} \text{.}
  \end{split}
  \end{equation}
  
  If the matrices $(I - A)$ and $(I - B)$ are invertible, then it follows that 
  \begin{equation}\label{lemma3_1}
    \begin{split}
      (I - A)^{-1} & (A - B) (I - B)^{-1} 
      \\ &= (I - A)^{-1} [(A - I) + (I - B)] (I - B)^{-1}
      \\ & = (I - A)^{-1} - (I - B)^{-1} \text{.}
    \end{split}
  \end{equation}

  Substituting Equation \eqref{lemma3_1} into Equation \eqref{HMASA_2_1}, we obtain:
  \begin{equation}\label{HMASA_2_2}
    \begin{split}
      &\mathcal{k}  J_{\mathcal{S}} ({\pi})   - J_{U} ({f_{\nu}(\pi)} ) \mathcal{k}  
      \\ & = \mathcal{k} \xi_{\mathcal{S}}^{\top} (I - \gamma P_{\mathcal{S}}^{\pi} )^{-1}(P_{\mathcal{S}}^{\pi} - C^{\pi} P_\nu)(I - \gamma C^{\pi} P_\nu)^{-1} R_{\mathcal{S}}^{\pi} \mathcal{k} 
      \\ & = \mathcal{k} \xi_{U}^{\top} P_{\nu}  \sum_{k=0}^{\infty}(\gamma P_{\mathcal{S}}^{\pi})^{k} (P_{\mathcal{S}}^{\pi} - C^{\pi} P_\nu)(I - \gamma C^{\pi} P_\nu)^{-1} R_{\mathcal{S}}^{\pi} \mathcal{k} 
      \\ & = \mathcal{k} \xi_{U}^{\top}  \sum_{k=0}^{\infty}(\gamma P_{U}^{f(\pi)})^{k} P_{\nu} (P_{\mathcal{S}}^{\pi} - C^{\pi} P_\nu)(I - \gamma C^{\pi} P_\nu)^{-1} R_{\mathcal{S}}^{\pi} \mathcal{k} 
      \\ & \leq \mathcal{k} \xi_{U}^{\top}  \sum_{k=0}^{\infty}(\gamma P_{U}^{f(\pi)})^{k} \mathcal{k} \mathcal{k} P_{\nu} (P_{\mathcal{S}}^{\pi} - C^{\pi} P_\nu)   \hat{V}_{\mathcal{S},\nu}^{\pi} \mathcal{k}
      \text{,}
  \end{split}
  \end{equation}
  where the inequality follows from the law of cosines.
  For the right-hand side of the above equation, we have:
  \begin{equation}\label{HMASA_2_3}
    \begin{split}
      \mathcal{k} \xi_{U}^{\top}  \sum_{k=0}^{\infty}(\gamma P_{U}^{f(\pi)})^{k}  \mathcal{k} 
       & \leq   \lim_{T \rightarrow \infty} \sum_{t=0}^{T} \gamma^{t}  \mathcal{k} \xi_{U}^{\top} (P_{U}^{f(\pi)})^{t}  \mathcal{k} 
      \text{.}
  \end{split}
  \end{equation}
  For any probability vector $x$, it holds that $\mathcal{k} x \mathcal{k} \leq 1$.
  Therefore, we have:
  \begin{equation}\label{HMASA_2_4}
    \begin{split}
      \mathcal{k}  J_{\mathcal{S}} ({\pi}) &  - J_{U} ({f_{\nu}(\pi)} ) \mathcal{k} 
      \\ & \leq  \left( \lim_{T \rightarrow \infty} \sum_{t=0}^{T} \gamma^{t} \right) \mathcal{k} P_{\nu} (P_{\mathcal{S}}^{\pi} - C^{\pi} P_\nu)   \hat{V}_{\mathcal{S},\nu}^{\pi} \mathcal{k}
      \\ & =
      \frac{1}{1 - \gamma} \mathcal{k} P_{\nu} (P_{\mathcal{S}}^{\pi} - C^{\pi} P_\nu)   \hat{V}_{\mathcal{S},\nu}^{\pi} \mathcal{k}
      \text{.}
  \end{split}
  \end{equation}

  By the definition of the value function, we have:
  \begin{equation}\label{lemma5_1}
    \begin{split}
      P_{\nu} C^{\pi} P_\nu \hat{V}_{\mathcal{S},\nu}^{\pi} &= P_{\nu} P_{\mathcal{S}}^{\pi} P_{\nu}^{\dagger} P_\nu \hat{V}_{\mathcal{S},\nu}^{\pi} 
      \\ &= P_{U}^{f_{\nu}(\pi)} P_{\nu}  P_{\nu}^{\dagger} P_\nu \hat{V}_{\mathcal{S},\nu}^{\pi}
      \\ & = P_{U}^{f_{\nu}(\pi)} {V}_{U}^{f_{\nu}(\pi)} \text{.}
    \end{split}
  \end{equation}

  Substituting Equation \eqref{lemma5_1} into Equation \eqref{HMASA_2_4}, we obtain:
  \begin{equation}\label{add_HMASA_2_5}
    \begin{split}
      \mathcal{k}  J_{\mathcal{S}} ({\pi})  - J_{U} ({f_{\nu}(\pi)} ) \mathcal{k} 
      & \leq 
      \frac{\mathcal{k} P_{\nu} P_{\mathcal{S}}^{\pi} \hat{V}_{\mathcal{S},\nu}^{\pi} - P_{U}^{f_{\nu}(\pi)} {V}_{U}^{f_{\nu}(\pi)} \mathcal{k} }{ 1 - \gamma }
      \\ & =  \frac{\mathcal{k}g(\tilde{\pi}, \nu ) \mathcal{k}}{ 1 - \gamma }
      \text{.}
  \end{split}
  \end{equation}

According to Equation \eqref{add_HMASA_2_5} and the triangle inequality, it holds that
\begin{equation}\label{HMASA_2_6}
  \begin{split}
    J_{\mathcal{S}}(\tilde{\pi}) & = J_{\mathcal{S}}(\tilde{\pi})  + J_{U}(f_{\nu}(\tilde{\pi})) - J_{U}(f_{\nu}(\tilde{\pi}))
    \\ & \geq J_{U}(f_{\nu}(\tilde{\pi})) - \mathcal{k} J_{\mathcal{S}}(\tilde{\pi}) - J_{U}(f_{\nu}(\tilde{\pi})) \mathcal{k}
    \\ & \geq J_{U}(f_{\nu}(\tilde{\pi})) - \frac{\mathcal{k}g(\tilde{\pi}, \nu ) \mathcal{k}}{ 1 - \gamma }
    \text{.}
\end{split}
\end{equation} 

\end{proof}

Theorem \ref{POLB} states that, given an encoding matrix, the value function error between a ground Markov chain and its corresponding encoding Markov chain is bounded above by $\frac{\mathcal{k}g(\tilde{\pi}, \nu ) \mathcal{k}}{ 1 - \gamma }$. 
Moreover, $J_{U}(f_{\nu}(\tilde{\pi})) - \frac{\mathcal{k}g(\tilde{\pi}, \nu ) \mathcal{k}}{ 1 - \gamma }$ represents a lower bound on the policy performance. In other words, when the row space of $P_\nu$ does not contain $\text{span}(\mathcal{F})$, we regard the policy performance lower bound as the objective function to be optimized. 
Finally, we derive a feasible gradient ascent direction for the optimization variables. The optimization involves not only the policy $\pi$, but also the encoding matrix $P_{\nu}$. Let $\theta$, and $\omega$ denote the parameters of the policy $\pi$, and the encoding matrix $P_{\nu}$, respectively.


\begin{lemma}\label{lemm2}
  Let $A$ be an $n \times n$ invertible square matrix, $\mathbf{W}$ be the inverse of $A$, and $F({A})$ is an $n \times n$-variate and differentiable function with respect to ${A}$, then the partial differentials of $F$ with respect to ${A}$ and $\mathbf{W}$ satisfy
  $$ \frac{\partial F}{\partial {A}} = -{A}^{-\top} \frac{\partial F}{\partial \mathbf{W}} {A}^{-\top} \text{,}$$
  where ${A}^{-\top} = (A^{\top})^{-1}$.
  The conclusion follows from reference \cite{petersen2012matrix} (section 2.3, pp. 10). 
\end{lemma}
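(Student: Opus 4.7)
The plan is to prove the identity via the chain rule for matrix-valued functions, using only the defining relation $AW = I$ and the standard trace-based formulation of matrix derivatives. I will avoid index-by-index differentiation in favor of a clean differential calculation, which keeps track of the transpose conventions automatically.

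First, I would establish how $W$ varies when $A$ is perturbed. Starting from $AW = I$ and taking the differential of both sides, the product rule gives $dA \, W + A \, dW = 0$, so $dW = -A^{-1} (dA)\, A^{-1} = -W (dA) W$. This is the key algebraic fact on which everything else rests.

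Next, I would write $dF$ in two equivalent ways. On one hand, treating $F$ as a function of $A$ directly, $dF = \operatorname{Tr}\!\bigl((\partial F/\partial A)^{\top} dA\bigr)$. On the other hand, treating $F$ as a composition $F(A) = F(W(A))$, the chain rule gives $dF = \operatorname{Tr}\!\bigl((\partial F/\partial W)^{\top} dW\bigr)$. Substituting the expression $dW = -A^{-1}(dA)A^{-1}$ from the previous step and applying the cyclic property of the trace yields
\begin{equation*}
dF = -\operatorname{Tr}\!\bigl((\partial F/\partial W)^{\top} A^{-1}(dA)A^{-1}\bigr) = -\operatorname{Tr}\!\bigl(A^{-1}(\partial F/\partial W)^{\top} A^{-1}\, dA\bigr).
\end{equation*}
Matching the two expressions for $dF$ for all admissible perturbations $dA$ gives $(\partial F/\partial A)^{\top} = -A^{-1}(\partial F/\partial W)^{\top} A^{-1}$, and taking the transpose of both sides produces the claimed identity $\partial F/\partial A = -A^{-\top}(\partial F/\partial W) A^{-\top}$.

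The main obstacle I anticipate is bookkeeping with transpose conventions: different references define $\partial F/\partial A$ either as the matrix of partials $[\partial F/\partial A_{ij}]$ or as its transpose, and a single inconsistency will flip the outer transposes in the result. To avoid this, I would explicitly adopt the convention used in Petersen and Pedersen, where $dF = \operatorname{Tr}\!\bigl((\partial F/\partial A)^{\top} dA\bigr)$, and verify the end-of-derivation transposition carefully. Once the convention is fixed, the rest of the argument is a short differential computation and requires no further analytic ingredients beyond invertibility of $A$.
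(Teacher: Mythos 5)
Your proposal is correct, and it is worth noting that the paper itself does not actually prove this lemma at all --- it simply cites the identity from Petersen and Pedersen's Matrix Cookbook (section 2.3). Your derivation is the standard self-contained argument behind that cited identity: the differential relation $dW = -A^{-1}(dA)A^{-1}$ obtained from $AW=I$, the trace form $dF = \operatorname{Tr}\bigl((\partial F/\partial A)^{\top}dA\bigr)$ of the scalar differential, the cyclic permutation inside the trace, and the final transposition all check out, and the matching step is legitimate because $A$ ranges over an open set of invertible matrices so $dA$ is an arbitrary perturbation and the trace pairing is non-degenerate. The one convention you rightly flag --- whether $\partial F/\partial A$ denotes $[\partial F/\partial A_{ij}]$ or its transpose --- is indeed the only place a sign-free error could creep in, and your choice matches the Cookbook's, which is the convention the paper relies on when it later applies this lemma to differentiate $\operatorname{Tr}\bigl(x_{s'}x_s^{\top}P_{\nu}^{\top}(P_{\nu}P_{\nu}^{\top})^{-1}\bigr)$ with respect to $P_\nu$. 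In short, you have supplied a proof where the paper supplied only a pointer, and the proof is sound.
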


\begin{theorem}[Encoding Matrix Gradient Theorem]\label{theorem5}
  The gradient with respect to the parameter $\theta$ and $\omega$ are given by:
  \begin{equation}\label{HMASA_4_1}
    \begin{split}
       \nabla_{\theta} & \Big[J_{U}(f_{\nu}(\pi_{\theta})) -  \frac{\mathcal{k} g({\pi_\theta}, \nu) \mathcal{k}}{1 - \gamma}\Big]  
      \\ & = \mathbb{E}_{U_{0} \sim \xi_{U}} \Big[\nabla_{\theta} V_{U}(f_{\nu}(\pi_{\theta}))\Big] 
      \\ & - \sum_{u \in U, s, s' \in\mathcal{S}}\frac{2 g({\pi_\theta}, \nu)(u)}{\mathcal{k}  g({\pi_\theta}, \nu) \mathcal{k}} \Big[ \nu(s|u) 
      \big[ P_{\mathcal{S}}^{\pi_{\theta}}(s'|s) \nabla_{\theta} \hat{V}_{\mathcal{S},\nu}^{\pi_{\theta}}(s') 
      \\ & +  \sum_{a \in \mathcal{A}} P_{\mathcal{S}\mathcal{A}}(s'|s, a) \hat{V}_{\mathcal{S},\nu}^{\pi_{\theta}}(s') \nabla_{\theta} \pi_{\theta}(a|s) \big] 
      \\ & - \sum_{u \in U}  \nu(s|u)  \frac{1}{\gamma} \nabla_{\theta} (\hat{V}_{\mathcal{S},\nu}^{\pi_{\theta}} - R_{\mathcal{S}}^{\pi_{\theta}})(s) \Big]
  \end{split}
  \end{equation}
  and
  \begin{equation}\label{HMASA_4_2}
    \begin{split}
      &\nabla_{\omega} \Big[J_{U}(f_{\nu_{\omega}}(\pi)) -  \frac{\mathcal{k} g({\pi}, \nu_{\omega}) \mathcal{k}}{1 - \gamma}\Big]
      \\ & = \mathbb{E}_{U_{0} \sim \xi_{U}} \Big[ \nabla_{\omega} V^{f_{\nu_{\omega}}(\pi)}_{U}(u)  \Big]
      \\ & - \sum_{u \in U, s\in\mathcal{S}}\frac{2 g({\pi}, \nu_{\omega})(u)}{\mathcal{k}  g({\pi}, \nu_{\omega}) \mathcal{k}} \Big\{   \nabla_{\omega}  \nu_{\omega}(s|u) \Big[ (P_{\mathcal{S}}^{\pi} \hat{V}_{\mathcal{S},\nu_{\omega}}^{\pi})(s) 
      \\ & - \frac{1}{\gamma}(\hat{V}_{\mathcal{S},\nu_{\omega}}^{\pi} - R_{\mathcal{S}}^{\pi})(s) \Big]  
      \\ & +
      \nu_{\omega}(s|u) \Big[ \nabla_{\omega} (P_{\mathcal{S}}^{\pi} \hat{V}_{\mathcal{S},\nu_{\omega}}^{\pi})(s) 
      - \frac{1}{\gamma} \nabla_{\omega}  \hat{V}_{\mathcal{S},\nu_{\omega}}^{\pi}(s) \Big]\Big\} \text{,}
  \end{split}
  \end{equation}
  where
  \begin{equation*}\label{HMASA_2_7}
    \begin{split}
      &\nabla_{\omega} V^{f_{\nu_{\omega}}(\pi)}_{U}(u) 
      \\ & = \mathbb{E}_{X_{t} \sim \eta(\cdot|f(\pi), u), S_{t} \sim \nu_{\omega}(\cdot|X_{t}), A_{t} \in \pi(\cdot|S_{t}) } \Big[   \nabla_{\omega} \ln \nu_{\omega}(S_{t}|X_{t})
      \\ &  \cdot 
      \big [r(S_{t},A_{t}) + \gamma \mathbb{E}_{Y \sim P_{2}(\cdot | S_{t}, A_{t})} [V^{f_{\nu_{\omega}}(\pi)}_{U}(Y)] \big] 
      \\ & + \gamma  \mathbb{E}_{ S_{t+1} \sim P_{\mathcal{S}\mathcal{A}}(\cdot | S_{t}, A_{t}), Y \sim \mu_{\omega}^{\dagger}(\cdot | S_{t+1})} [ \nabla_{\omega} \ln \mu_{\omega}^{\dagger}(Y|S_{t+1})
      \\ & \cdot V^{f_{\nu_{\omega}}(\pi)}_{U}(Y)]  \Big] \text{,}
  \end{split}
  \end{equation*}
  \begin{equation*}
    \begin{split}
      \nabla_{P_\nu} \mu^{\dagger}(s'|s) 
      & = \frac {\mathrm{Tr}({\partial x_{s'} x_s^{\top} P_{\nu}^{\top} (P_{\nu}P_{\nu}^{\top})^{-1}}  )}{ \partial P_\nu}
      \\ & =  \frac{\mathrm{Tr}( {\partial  (x_s x_{s'}^{\top} P_{\nu}) } (P_{\nu}P_{\nu}^{\top})^{-1} )}{ \partial P_\nu}
      \\ & + \frac{\mathrm{Tr}(x_{s'} x_s^{\top} P_{\nu}^{\top} {\partial  (P_{\nu}P_{\nu}^{\top})^{-1}} )}{ \partial P_\nu}
      \\ & = -  (P_{\nu}P_{\nu}^{\top})^{-1} x_{s} x_{s'}^{\top}  P_{\nu}^{\top} (P_{\nu}P_{\nu}^{\top})^{-1} 
      \\ & + x_{s'} x_s^{\top} (P_{\nu}P_{\nu}^{\top})^{-1} 
      \text{,}
  \end{split}
  \end{equation*}
  and $P_{2}(u' | s, a) = \sum_{s' \in \mathcal{S}} P_{\mathcal{S}\mathcal{A}}(s'|s,a) \mu^{\dagger}(u'|s') $.
\end{theorem}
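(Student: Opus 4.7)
The plan is to split the lower-bound objective as $L(\theta,\omega) = J_{U}(f_{\nu_{\omega}}(\pi_{\theta})) - \|g(\pi_{\theta},\nu_{\omega})\|/(1-\gamma)$ and differentiate each piece by linearity. For $\nabla_{\theta}L$, the first piece is exactly the homomorphic policy gradient established in Theorem \ref{new_theorem4}, which immediately yields the leading term $\mathbb{E}_{U_{0}\sim\xi_{U}}[\nabla_{\theta}V_{U}^{f_{\nu}(\pi_{\theta})}]$ of Equation \eqref{HMASA_4_1}. For the norm term I would first simplify $g(\pi,\nu)$ using the encoding-chain Bellman identity $\hat{V}_{\mathcal{S},\nu}^{\pi} = R_{\mathcal{S}}^{\pi} + \gamma C^{\pi}P_{\nu}\hat{V}_{\mathcal{S},\nu}^{\pi}$ together with Equation \eqref{lemma5_1}, which rewrites $P_{U}^{f_{\nu}(\pi)}V_{U}^{f_{\nu}(\pi)} = \gamma^{-1}P_{\nu}(\hat{V}_{\mathcal{S},\nu}^{\pi}-R_{\mathcal{S}}^{\pi})$ and gives the coordinate form $g(\pi,\nu)(u)=\sum_{s}\nu(s|u)\bigl[(P_{\mathcal{S}}^{\pi}\hat{V}_{\mathcal{S},\nu}^{\pi})(s)-\gamma^{-1}(\hat{V}_{\mathcal{S},\nu}^{\pi}-R_{\mathcal{S}}^{\pi})(s)\bigr]$. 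Since $\nabla\|g\| = \sum_{u}(g(u)/\|g\|)\nabla g(u)$, applying the product rule to the bracket produces three contributions: one from $\nabla_{\theta}\hat{V}_{\mathcal{S},\nu}^{\pi_{\theta}}(s')$, one from $\nabla_{\theta}\pi_{\theta}(a|s)$ acting on $P_{\mathcal{S}\mathcal{A}}$ and $r$, and one from $\nabla_{\theta}(\hat{V}_{\mathcal{S},\nu}^{\pi_{\theta}}-R_{\mathcal{S}}^{\pi_{\theta}})(s)$, which together reproduce Equation \eqref{HMASA_4_1}.

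For $\nabla_{\omega}L$ the main new ingredient is $\nabla_{\omega}V_{U}^{f_{\nu_{\omega}}(\pi)}(u)$. I would derive it by unrolling the Bellman recursion exactly as in the proof of Theorem \ref{new_theorem4}, but now accounting for the fact that both the abstract reward $R_{U}^{\pi,\nu_{\omega}}=P_{\nu_{\omega}}R_{\mathcal{S}}^{\pi}$ and the abstract transition $P_{U}^{f_{\nu_{\omega}}(\pi)}=P_{\nu_{\omega}}P_{\mathcal{S}}^{\pi}P_{\nu_{\omega}}^{\dagger}$ depend on $\omega$. Differentiating the reward factor with $\nabla_{\omega}\nu_{\omega}=\nu_{\omega}\nabla_{\omega}\ln\nu_{\omega}$ yields the first log-derivative expectation, while differentiating the right-hand $P_{\nu_{\omega}}^{\dagger}$ inside the transition, again via $\nabla_{\omega}\mu_{\omega}^{\dagger}=\mu_{\omega}^{\dagger}\nabla_{\omega}\ln\mu_{\omega}^{\dagger}$, produces the second expectation along the induced kernel $P_{2}$. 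An identical discounted-state-visitation argument to the one used in Theorem \ref{new_theorem4} collapses the infinite sum of powers of $P_{U}^{f_{\nu_{\omega}}(\pi)}$ into the distribution $\eta(\cdot|f(\pi),u)$.

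The explicit formula for $\nabla_{P_{\nu}}\mu^{\dagger}(s'|s)$ is obtained by writing $\mu^{\dagger}(s'|s)=x_{s'}^{\top}P_{\nu}^{\top}(P_{\nu}P_{\nu}^{\top})^{-1}x_{s}=\mathrm{Tr}\bigl(x_{s}x_{s'}^{\top}P_{\nu}^{\top}(P_{\nu}P_{\nu}^{\top})^{-1}\bigr)$, applying the product rule to the two factors of $P_{\nu}^{\top}$ and $(P_{\nu}P_{\nu}^{\top})^{-1}$, and invoking Lemma \ref{lemm2} to differentiate the inverse. The two stated terms correspond precisely to the derivative of $P_{\nu}^{\top}$ (contributing $x_{s'}x_{s}^{\top}(P_{\nu}P_{\nu}^{\top})^{-1}$) and the derivative of $(P_{\nu}P_{\nu}^{\top})^{-1}$ (contributing $-(P_{\nu}P_{\nu}^{\top})^{-1}x_{s}x_{s'}^{\top}P_{\nu}^{\top}(P_{\nu}P_{\nu}^{\top})^{-1}$). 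The $\omega$-derivative of the norm term then follows from the same product-rule expansion of the coordinate form of $g$ used for $\theta$, giving one contribution from $\nabla_{\omega}\nu_{\omega}(s|u)$ multiplying the bracket and one from $\nabla_{\omega}$ applied inside, where $\nabla_{\omega}\hat{V}_{\mathcal{S},\nu_{\omega}}^{\pi}$ is in turn obtained by a short Bellman unrolling in the encoding chain.

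The hard part will be the pseudoinverse bookkeeping: $\omega$ enters $V_{U}^{f_{\nu_{\omega}}(\pi)}$ through both the left factor $P_{\nu_{\omega}}$ and the right factor $P_{\nu_{\omega}}^{\dagger}$ of $P_{U}^{f_{\nu_{\omega}}(\pi)}$, and keeping the two log-derivative terms cleanly separated requires isolating these two occurrences before applying the product rule and then repackaging the resulting matrix expressions into the two distinct sample-based expectations in the statement. A secondary technical point is that the derivation implicitly assumes $g(\pi_{\theta},\nu_{\omega})\neq 0$ so that $\|g\|$ is differentiable; at the boundary one would pass to a subgradient, which does not affect the use of the formula as an ascent direction.
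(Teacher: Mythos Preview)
Your proposal is correct and follows essentially the same route as the paper: invoke Theorem~\ref{new_theorem4} for the $J_U$ term, rewrite $g$ via the Bellman identity $P_{U}^{f_{\nu}(\pi)}V_{U}^{f_{\nu}(\pi)} = \gamma^{-1}P_{\nu}(\hat{V}_{\mathcal{S},\nu}^{\pi}-R_{\mathcal{S}}^{\pi})$, apply the chain rule to $\|g\|$ and the product rule to its coordinates, unroll the abstract Bellman recursion to get $\nabla_{\omega}V_{U}^{f_{\nu_{\omega}}(\pi)}$ with the two log-derivative contributions, and use the trace representation together with Lemma~\ref{lemm2} for $\nabla_{P_{\nu}}\mu^{\dagger}$. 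The only cosmetic discrepancy is that your $\nabla\|g\|=\sum_{u}g(u)\|g\|^{-1}\nabla g(u)$ is the standard Euclidean-norm derivative, whereas the theorem statement (and the paper's derivation) carries an extra factor of~$2$; you will need to adopt that convention to match Equations~\eqref{HMASA_4_1}--\eqref{HMASA_4_2} verbatim.
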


\begin{proof}
  First, we analyze the gradient with respect to the parameter $\theta$. Theorem \ref{new_theorem4} has already provided the derivative of the performance function $J_{U} ({f_{\nu}(\pi_\theta)} )$, so we only consider the derivative with respect to $g(\pi_{\theta}, \nu)$. By the chain rule, the derivative of any vector $x$ with respect to $\mathcal{k}x\mathcal{k}$ is given by:
\begin{equation}\label{add_HMASA_3_1}
  \begin{split}
    \nabla_{x_i} \mathcal{k} x \mathcal{k}  & = \nabla_{x_i}  \sqrt{\sum_{j} x_{j}^{2}} = \frac{2 x_{i}}{\mathcal{k} x \mathcal{k}} 
    \text{.}
\end{split}
\end{equation}
Taking the derivative of $g(\pi, \nu )(s)$ with respect to the parameter $\theta$ yields:

\begin{equation}\label{add_HMASA_3_2}
  \begin{split}
    \nabla_{\theta}  g(\pi_{\theta}, \nu )(u) 
    & = \nabla_{\theta} \sum_{u \in U} \nu(s|u) \sum_{s' \in \mathcal{S}} P_{\mathcal{S}}^{\pi_{\theta}}(s'|s) \hat{V}_{\mathcal{S},\nu}^{\pi_{\theta}}(s') 
    \\ & - \nabla_{\theta} \big(P_{U}^{f_{\nu}(\pi_{\theta})} {V}_{U}^{f_{\nu}(\pi_{\theta})}\big) (s)
    \\ & = P_{\nu} P_{\mathcal{S}}^{\pi_{\theta}} \hat{V}_{\mathcal{S},\nu}^{\pi_{\theta}} - P_{U}^{f_{\nu}(\pi_{\theta})} {V}_{U}^{f_{\nu}(\pi_{\theta})}
    \\ & = \sum_{u \in U} \Big[ \nu(s|u) \sum_{s' \in \mathcal{S}} \big[ P_{\mathcal{S}}^{\pi_{\theta}}(s'|s) \nabla_{\theta} \hat{V}_{\mathcal{S},\nu}^{\pi_{\theta}}(s') 
    \\ & +  \sum_{a \in \mathcal{A}} P_{\mathcal{S}\mathcal{A}}(s'|s, a) \hat{V}_{\mathcal{S},\nu}^{\pi_{\theta}}(s') \nabla_{\theta} \pi_{\theta}(a|s) \big] \Big]
    \\ & - \sum_{u \in U}  \nu(s|u)  \frac{1}{\gamma} \nabla_{\theta} (\hat{V}_{\mathcal{S},\nu}^{\pi_{\theta}} - R_{\mathcal{S}}^{\pi_{\theta}})(s) 
    \text{.}
\end{split}
\end{equation}
where $P_{U}^{f_{\nu}(\pi_{\theta})} {V}_{U}^{f_{\nu}(\pi_{\theta})} = \frac{1}{\gamma} P_{\nu}(\hat{V}_{\mathcal{S},\nu}^{\pi_{\theta}} - R_{\mathcal{S}}^{\pi_{\theta}})$ follows from Equation \eqref{value_d_1} and ${V}_{U}^{f_{\nu}(\pi_{\theta})} = P_{\nu} \hat{V}_{\mathcal{S},\nu}^{\pi_{\theta}}$.
Substituting Equation \eqref{add_HMASA_3_2} into Equation \eqref{add_HMASA_3_1} yields:
\begin{equation}\label{add_HMASA_3_3}
  \begin{split}
    & \nabla_{\theta}  \frac{\mathcal{k} g({\pi_\theta}, \nu) \mathcal{k}}{1 - \gamma} 
    \\ & = \sum_{u \in U}\frac{2 g({\pi_\theta}, \nu)(u)}{\mathcal{k}  g({\pi_\theta}, \nu) \mathcal{k}} \Big[ \sum_{u \in U}  \nu(s|u) 
    \sum_{s' \in \mathcal{S}} \big[ P_{\mathcal{S}}^{\pi_{\theta}}(s'|s) \nabla_{\theta} \hat{V}_{\mathcal{S},\nu}^{\pi_{\theta}}(s') 
    \\ & +  \sum_{a \in \mathcal{A}} P_{\mathcal{S}\mathcal{A}}(s'|s, a) \hat{V}_{\mathcal{S},\nu}^{\pi_{\theta}}(s') \nabla_{\theta} \pi_{\theta}(a|s) \big] 
    \\ & - \sum_{u \in U}  \nu(s|u)  \frac{1}{\gamma} \nabla_{\theta} (\hat{V}_{\mathcal{S},\nu}^{\pi_{\theta}} - R_{\mathcal{S}}^{\pi_{\theta}})(s) \Big] 
    \text{.}
\end{split}
\end{equation}

Next, we turn to the analysis of the gradient with respect to the parameters $\omega$ of the encoding matrix. For brevity, we denote $\mu^{\dagger}(u' | s)$ as the element of matrix $P_{\nu}^{\dagger}$ located at the row corresponding to state $s \in \mathcal{S}$ and the column corresponding to abstract state $u' \in U$.
We differentiate the two terms $V^{f_{\nu_{\omega}}(\pi)}_{U}$ and $g({\pi}, \nu_{\omega})$ separately. First, the derivative of the first term is given by:

\begin{equation}\label{GEM_1_1}
  \begin{split}
    \nabla_{\omega} &V^{f_{\nu_{\omega}}(\pi)}_{U}(u) 
    \\ & = \nabla_{\omega} [R_{\mathcal{S}}^{\pi, \nu_{\omega}}(u) + \gamma \sum_{u'  \in U} P_{U}^{f_{\nu_{\omega}}(\pi)}(u'|u) V^{f_{\nu}(\pi)}_{U}(u')]
    \\ & =  \sum_{u \in U, s \in \mathcal{S}} \nu_{\omega}(s|u) \pi(a|s) r(s,a) \nabla_{\omega} \ln  \nu_{\omega}(s|u) 
    \\ & + \gamma \sum_{s \in \mathcal{S}, u' \in U}  \nu_{\omega}(s|u) P_{\mathcal{S}}^{\pi}(s'|s) \sum_{s' \in \mathcal{S}}  \nabla_{\omega} \mu_{\omega}^{\dagger}(u'|s') 
    \\ & \cdot \big[\big(\ln \nu(s|u) + \ln \mu_{\omega}^{\dagger}(u'|s')\big) V^{f_{\nu_{\omega}}(\pi)}_{U}(u')\big]
    \\ & + \gamma \sum_{u' \in U} P_{U}^{f_{\nu_{\omega}}(\pi)}(u'|u) \nabla_{\omega}  V^{f_{\nu_{\omega}}(\pi)}_{U}(u') \text{.}
\end{split}
\end{equation}

Let $P_{2}(u' | s, a) = \sum_{s' \in \mathcal{S}} P_{\mathcal{S}\mathcal{A}}(s'|s,a) \mu^{\dagger}(u'|s') $. Following this pattern, we obtain:
\begin{equation}\label{GEM_1_2}
  \begin{split}
    & =\mathbb{E}_{X_{t} \sim \eta(\cdot|f(\pi), u), S_{t} \sim \nu_{\omega}(\cdot|X_{t}), A_{t} \in \pi(\cdot|S_{t}) } \Big[   \nabla_{\omega} \ln \nu_{\omega}(S_{t}|X_{t})
    \\ &  \cdot 
    \big [r(S_{t},A_{t}) + \gamma \mathbb{E}_{Y \sim P_{2}(\cdot | S_{t}, A_{t})} [V^{f_{\nu_{\omega}}(\pi)}_{U}(Y)] \big] 
    \\ & + \gamma  \mathbb{E}_{ S_{t+1} \sim P_{\mathcal{S}\mathcal{A}}(\cdot | S_{t}, A_{t}), Y \sim \mu_{\omega}^{\dagger}(\cdot | S_{t+1})} [ \nabla_{\omega} \ln \mu_{\omega}^{\dagger}(Y|S_{t+1})
    \\ & \cdot V^{f_{\nu_{\omega}}(\pi)}_{U}(Y)]  \Big]  \text{,}
\end{split}
\end{equation}
where $ \eta(x|u, f(\pi)) = \sum_{t=0}^{\infty} \gamma^{t} P(X_t = x| X_0 = u, \pi)$.

Furthermore, the derivative of $\nabla_{\omega} \mu_{\omega}^{\dagger}(s'|s)$ can be expressed using matrix. Let $x_s \in \mathbb{R}^{|\mathcal{S}|}$ be a zero vector with the entry corresponding to state $s \in \mathcal{S}$ equal to $1$.
According to the rules of matrix, we have:
\begin{equation}\label{GEM_1_3}
  \begin{split}
     \mu_{\omega}^{\dagger}(s'|s) 
    & = x_s^{\top} P_{\nu_{\omega}}^{\dagger} x_{s'} = \mathrm{Tr}(x_{s'} x_s^{\top} P_{\nu_{\omega}}^{\dagger}  )
    \text{.}
\end{split}
\end{equation}

Based on the above expression, the derivative of $\mu_{\omega}^{\dagger}(s'|s) $ can be rewritten as:
\begin{equation}\label{GEM_1_4}
  \begin{split}
    \nabla_{P_\nu} \mu^{\dagger}(s'|s) 
    & = \nabla_{P_\nu} \mathrm{Tr}(x_{s'} x_s^{\top} P_{\nu}^{\top} (P_{\nu}P_{\nu}^{\top})^{-1}  )
    \\ & =  \frac {\mathrm{Tr}({\partial x_{s'} x_s^{\top} P_{\nu}^{\top} (P_{\nu}P_{\nu}^{\top})^{-1}}  )}{ \partial P_\nu}
    \text{,}
\end{split}
\end{equation}
where the final step follows from the differentiation rule for the trace of a matrix \cite{petersen2012matrix} (section 2, pp. 8, eq. 36).

By the chain rule for matrix calculus \cite{petersen2012matrix} (section 2.8.1, pp. 15, eq. 137), we have:
\begin{equation}\label{GEM_1_6}
  \begin{split}
    \nabla_{P_\nu} \mu^{\dagger}(s'|s) 
    & = \frac {\mathrm{Tr}({\partial x_{s'} x_s^{\top} P_{\nu}^{\top} (P_{\nu}P_{\nu}^{\top})^{-1}}  )}{ \partial P_\nu}
    \\ & =  \frac{\mathrm{Tr}( {\partial  (x_s x_{s'}^{\top} P_{\nu}) } (P_{\nu}P_{\nu}^{\top})^{-1} )}{ \partial P_\nu}
    \\ & + \frac{\mathrm{Tr}(x_{s'} x_s^{\top} P_{\nu}^{\top} {\partial  (P_{\nu}P_{\nu}^{\top})^{-1}} )}{ \partial P_\nu}
    \\ & = -  (P_{\nu}P_{\nu}^{\top})^{-1} x_{s} x_{s'}^{\top}  P_{\nu}^{\top} (P_{\nu}P_{\nu}^{\top})^{-1} 
    \\ & + x_{s'} x_s^{\top} (P_{\nu}P_{\nu}^{\top})^{-1} 
    \text{,}
\end{split}
\end{equation}
where the final equality follows from Lemma \ref{lemm2}.

For the second term, a derivation similar to that of Equation \eqref{add_HMASA_3_3} yields:
\begin{equation}\label{add_HMASA_3_4}
  \begin{split}
    & \nabla_{\omega}  \frac{\mathcal{k} g({\pi}, \nu_{\omega}) \mathcal{k}}{1 - \gamma} 
    \\ & = \sum_{u \in U, s\in\mathcal{S}}\frac{2 g({\pi}, \nu_{\omega})(u)}{\mathcal{k}  g({\pi}, \nu_{\omega}) \mathcal{k}} \nabla_{\omega}  \nu_{\omega}(s|u) \Big[ (P_{\mathcal{S}}^{\pi} \hat{V}_{\mathcal{S},\nu_{\omega}}^{\pi})(s) 
    \\ & - \nabla_{\omega} \frac{1}{\gamma}(\hat{V}_{\mathcal{S},\nu_{\omega}}^{\pi} - R_{\mathcal{S}}^{\pi})(s) \Big] 
    \\ & = \sum_{u \in U, s\in\mathcal{S}}\frac{2 g({\pi}, \nu_{\omega})(u)}{\mathcal{k}  g({\pi}, \nu_{\omega}) \mathcal{k}} \Big\{   \nabla_{\omega}  \nu_{\omega}(s|u) \Big[ (P_{\mathcal{S}}^{\pi} \hat{V}_{\mathcal{S},\nu_{\omega}}^{\pi})(s) 
    \\ & - \frac{1}{\gamma}(\hat{V}_{\mathcal{S},\nu_{\omega}}^{\pi} - R_{\mathcal{S}}^{\pi})(s) \Big]  
    \\ & +
    \nu_{\omega}(s|u) \Big[ \nabla_{\omega} (P_{\mathcal{S}}^{\pi} \hat{V}_{\mathcal{S},\nu_{\omega}}^{\pi})(s) 
    - \frac{1}{\gamma} \nabla_{\omega} \hat{V}_{\mathcal{S},\nu_{\omega}}^{\pi}(s) \Big]\Big\}
    \text{,}
\end{split}
\end{equation}
where the derivative of $\hat{V}_{\mathcal{S},\nu_{\omega}}^{\pi}(s)$ with respect to the parameters $\omega$ is given by:
\begin{equation}\label{add_HMASA_3_5}
  \begin{split}
    \nabla_{\omega} &  \hat{V}_{\mathcal{S},\nu_{\omega}}^{\pi}(s) 
    \\ & = \nabla_{\omega} R_{\mathcal{S}}^{\pi}(s) 
    \\ & + \nabla_{\omega} \sum_{s', \bar{s} \in \mathcal{S}, \bar{u}\in U} P_{\mathcal{S}}^{\pi}(\bar{s}|s) \nu_{\omega}^{\dagger}(\bar{u}|\bar{s}) \nu_{\omega}(s'|\bar{u}) \hat{V}_{\mathcal{S},\nu_{\omega}}^{\pi}(s') 
    \\ & =  \sum_{s', \bar{s} \in \mathcal{S}, \bar{u}\in U} P_{\mathcal{S}}^{\pi}(\bar{s}|s) \big[ \nabla_{\omega} \nu_{\omega}^{\dagger}(\bar{u}|\bar{s}) \nu_{\omega}(s'|\bar{u}) 
    \\ & +  \nu_{\omega}^{\dagger}(\bar{u}|\bar{s}) \nabla_{\omega} \nu_{\omega}(s'|\bar{u}) \big] \hat{V}_{\mathcal{S},\nu_{\omega}}^{\pi}(s')
    \\ & = \sum_{x \in \mathcal{S}} \eta(x| \pi, s) \sum_{\bar{s} \in \mathcal{S}, \bar{u}\in U} \big[ \nabla_{\omega} \nu_{\omega}^{\dagger}(\bar{u}|\bar{s}) \nu_{\omega}(x|\bar{u}) 
    \\ & +  \nu_{\omega}^{\dagger}(\bar{u}|\bar{s}) \nabla_{\omega} \nu_{\omega}(x|\bar{u}) \big] \hat{V}_{\mathcal{S},\nu_{\omega}}^{\pi}(x)
    \text{.}
\end{split}
\end{equation}

\end{proof}


\begin{algorithm}[tb]
  \caption{: Error-Based Homomorphic Policy Gradient Algorithm (EBHPG)}
  \label{alg_HM2}
  \begin{algorithmic}[2]
      \STATE Initial the policy ${{\theta}^{0}}$ and $\omega^{0}$
      \STATE $t = 0$
      \REPEAT
      \STATE $ C^{\pi_{{\theta}^{t}}} = P_{\mathcal{S}}^{\pi_{{\theta}^{t}}} P_{\nu_{\omega^{t}}}^{\dagger}$
      \STATE $V_{U}^{f_{\nu}(\pi_{{\theta}^{t}})} =  (I-\gamma P_{\nu_{\omega^{t}}} C^{\pi_{{\theta}^{t}}})^{-1} P_\nu R_{\mathcal{S}}^{\pi_{{\theta}^{t}}}$
      \STATE ${\theta}^{t+1} = {\theta}^{t} + lr * \frac{\partial}{\partial \theta}  \big(J_{U}(f_{\nu_{\omega^{t}}}(\pi_{\theta}^{t})) - g(\pi_{\theta}, \nu_{\omega^{t}}) \big)|_{\theta=\theta^t}$
      \STATE ${\omega}^{t+1} = {\omega}^{t} + lr * \frac{\partial}{\partial \omega}  \big(J_{U}(f_{\nu_{\omega^{t}}}(\pi_{\theta}^{t})) -  g(\pi_{\theta}, \nu_{\omega^{t}}) \big)|_{\omega=\omega^t}$
      \STATE $t = t + 1$
      \UNTIL{$\mathcal{k}V_{\mathcal{S}}^{\pi_{{\theta}^{t}}} - V_{\mathcal{S}}^{\pi_{\theta}^{t+1}}\mathcal{k} \leq \epsilon$}
  \end{algorithmic}
\end{algorithm}

At this point, we have derived the gradients of the performance function in Equation \eqref{polbt_1} with respect to the parameters $\theta$ and $\omega$. In summary, this subsection first introduces the objective function in Equation \eqref{polbt_1} and derives its corresponding policy gradient. Finally, we propose Algorithm \ref{alg_HM2}, which optimizes the lower bound of policy performance when the row space of $P_\nu$ does not contain $\text{span}(\mathcal{F})$.
Notably, the computational complexity for value function evaluation remains consistent with that of Algorithm \ref{alg_HM}. Table \ref{table1} summarizes the computational complexity of value function evaluation for the five algorithms.

\renewcommand{\arraystretch}{1.4}
\begin{table}[]
  \caption{The computational complexity of policy evaluation}
  \label{table1}
  \begin{center}
  \begin{small}
  \begin{tabular}{c | ccr}
  Method &  Computational Complexity  \\ \hline
  Policy Evaluation  & $\mathcal{O}( |\mathcal{S}||\mathcal{A}| + |\mathcal{S}|^{3} )$   \\ \hline
  HPG & $\mathcal{O}(|\mathcal{S}||\mathcal{A}| + |U||\mathcal{S}|^{2} + |U|^3)$ \\ \hline
  EBHPG & $\mathcal{O}(|\mathcal{S}||\mathcal{A}| + |U||\mathcal{S}|^{2} + |U|^3)$ \\ \hline
  \end{tabular}
  \end{small}
  \end{center}
\end{table}




\section{Numerical Results}\label{sec_4}
The numerical experiments aim to validate the theoretical framework and evaluate the performance of the proposed algorithms. First, we introduce the benchmark tasks used in the experiments. Next, we assess the performance of Algorithm \ref{alg_HM} under both conditions-when the sufficient condition is satisfied and when it is not. Finally, we compare Algorithm \ref{alg_HM2} against other methods on tasks with larger state spaces.
All experiments are conducted on the same hardware and use wall-clock time, the comparison directly reflects real-world efficiency (The experiments in this paper were conducted on a system equipped with an AMD Ryzen 7 5800X CPU and an NVIDIA GeForce RTX 3090 GPU).

\subsection{Experiments Setup}
Experiments are conducted on four representative tasks: Random Models \cite{Forghieri_Castel_Hyon_Pennec_2024}, Weakly coupled MDP \cite{sutton1999between}, Four-room gridworld (Example 5.2, p.110, \cite{sutton2018reinforcement}), and a tandem queue management problem inspired by real-world server systems \cite{ohno1987computing}.

\textbf{Random Models and Weakly Coupled MDP}:
We evaluate our slicing strategy on randomly generated MDP. For each $(s,a)$, transition probabilities $T(s,a,\cdot)$ are sampled randomly over $\mathcal{S}$, and rewards are drawn uniformly from $[0,1]$. A key variable is the transition matrix density-i.e., the proportion of nonzero entries-ranging from 10\% (sparse, nearly independent states) to 100\% (dense, smoother value functions). We construct weakly coupled MDP by partitioning the state space into disjoint clusters, each representing a local subtask or option. Transitions within each cluster are dense and randomly generated, while transitions across clusters are sparse, modeling loose dependencies between options. This design simulates hierarchical decision-making. Our slicing strategy effectively captures such structure, thereby improving value approximation and policy abstraction in hierarchical MDP.

\textbf{Four-room Gridworld}: The four-room gridworld consists of four rooms. The agent aims to reach a designated goal cell, with stochastic transitions: each action (North, South, East, West) succeeds with probability $0.8$ when the move is valid; otherwise, the agent remains in place. Upon reaching the goal, the agent is reset to the initial state, forming a continuing task. To assess scalability, we evaluate versions with increased state space sizes.

\textbf{Tandem Queue Management Problem}: The tandem queue management task involves two serial queues with parallel servers, where the agent adjusts server allocations to manage queue loads. Each queue allows three actions-add, retain, or remove a server-resulting in nine joint actions. The system's dimensionality is scalable via queue lengths and server capacities, following the design principles in \cite{9789809}.
 
\begin{figure}[!h]
  \centering \includegraphics[width=4.5in]{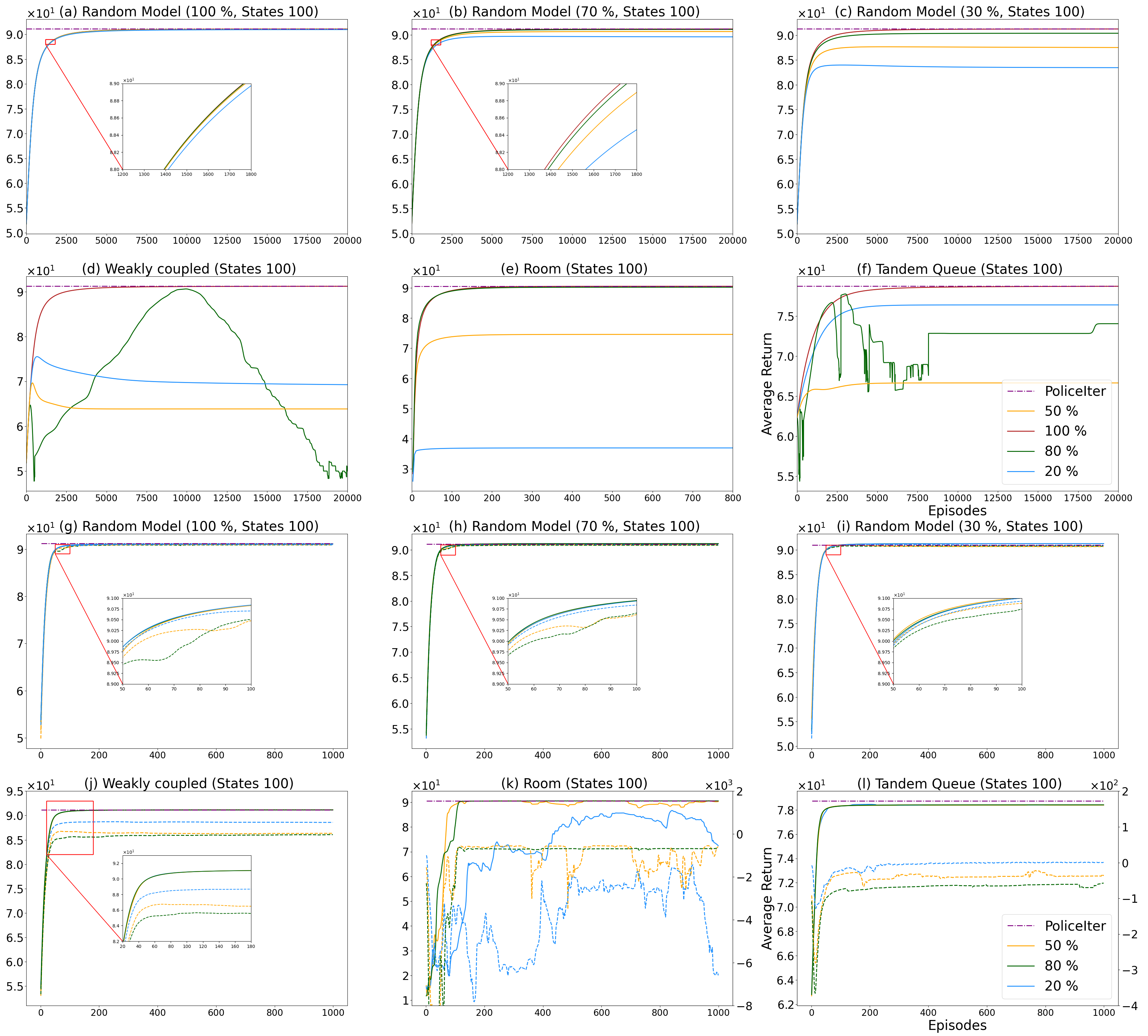}
  \caption{In the experimental results, the $x$-axis represents the number of iterations, while the $y$-axis indicates policy performance. At the top of each task subplot are the corresponding task names, with Task \texttt{"}Random Model\texttt{"} comprising three scenarios of different density levels (10\%, 50\%, and 100\%).
  The curves labeled \texttt{"}100\%\texttt{"}, \texttt{"}80\%\texttt{"}, \texttt{"}50\%\texttt{"}, and \texttt{"}20\%\texttt{"} in the figure correspond to different settings of the abstract state space size, where $|U| = int(0.2 * r)$, $|U| = int(0.5 * r)$, $|U| = int(0.8 * r)$, and $|U| = int(r)$, respectively.
  Figures (a)-(f) show the results of Algorithm \ref{alg_HM} under different values of $|U|$, while Figures (g)-(l) present the results of Algorithm \ref{alg_HM2} under the same settings. In all figures, the purple dashed line represents the policy performance after $40,000$ iterations of the policy iteration algorithm, which serves as an approximation of the optimal policy performance. In Figures (g)-(l), solid lines indicate actual policy performance (correspond to the left $y$-axis), while dashed lines represent the performance lower bound (In subfigures (k) and (l), the dashed lines correspond to the right $y$-axis.), corresponding to the $J_{U}(f_{\nu}(\tilde{\pi})) - \frac{\mathcal{k}g(\tilde{\pi}, \nu ) \mathcal{k}}{ 1 - \gamma }$ term in Equation \eqref{polbt_1}.}
  \label{result_1}
\end{figure}

\subsection{Experiments for theoretical validation}

To validate the theoretical results, we evaluate Algorithm \ref{alg_HM} under two settings: one where optimal policy equivalence holds ($|U| = r$) and one where it does not ($|U| < r$), where $r$ is defined in Definition \ref{de1_3}. In the experiments, all tasks are set with $|\mathcal{S}| = 100$. Specifically, random model task and weakly coupled MDP uses $|\mathcal{A}| = 10$, the four-room gridworld uses $|\mathcal{A}| = 4$, and the tandem queue management task uses $|\mathcal{A}| = 3$. 
It is worth noting that for the random models, we evaluated cases with transition matrix densities of 10\%, 50\%, and 100\%.
To eliminate errors due to value function approximation, Algorithm \ref{alg_HM} computes value functions using planning.

The experimental results are shown in Figure \ref{result_1}. To verify the correctness of Theorems \ref{theorem2} and \ref{theorem4}, we compare the performance of Algorithm \ref{alg_HM} under both conditions: when the sufficient condition is satisfied (represented by the curve labeled 100\%) and when it is not (represented by the curves labeled 80\%, 50\%, and 20\%). Here, a label such as 80\% indicates that $|U| = int(0.8 * r)$, and the others follow accordingly. Each task is solved using the standard policy iteration (PolicyIter) algorithm \cite{sutton2018reinforcement} for 2000 iterations to approximate the optimal solution, shown as a dashed line in the figure.

According to the results of Theorem \ref{theorem2} and Theorem \ref{theorem4}, optimal policy equivalence holds when the row space of $P_\nu$ contains $\text{span}(\mathcal{F})$. As shown in Figure \ref{result_1} (a)-(e), the curves labeled "100 \%" correspond to the cases where Algorithm \ref{alg_HM} satisfies the sufficient condition. In these settings, the policy consistently converges to the optimal value across all tasks. Moreover, the monotonic improvement in policy performance provides empirical support for the correctness of the policy gradient (Theorem \ref{new_theorem4}).
In contrast, when the sufficient condition is not satisfied (i.e., the curves labeled "80\%", "50\%", and "20\%"), the algorithm does not necessarily converge to the optimal solution and exhibits noticeable oscillations. This highlights a key limitation: satisfying the sufficient condition becomes computationally expensive when the rank $r$ is large. Therefore, the development of Theorem \ref{POLB} is essential.

According to Theorem \ref{POLB}, Algorithm \ref{alg_HM2} optimizes a lower bound on policy performance. Experimental results are presented in Figure \ref{result_1} (g)-(l), where dashed lines represent the performance lower bound (Equation \eqref{polbt_1}; $J_{U}(f_{\nu}(\tilde{\pi})) - \frac{\mathcal{k}g(\tilde{\pi}, \nu ) \mathcal{k}}{ 1 - \gamma}$), and solid lines represent actual policy performance (Equation \eqref{polbt_1}; $J_{\mathcal{S}}(\tilde{\pi})$). The results show that as the lower bound improves, the actual performance also increases accordingly. This observation further supports the validity of the policy gradient proposed in Theorem \ref{theorem5}.
From the perspective of the objective function in Equation \ref{polbt_1}, the term $\frac{\mathcal{k} g(\tilde{\pi}, \nu ) \mathcal{k}}{ 1 - \gamma }$ acts as a penalty, with the penalty factor proportional to $\frac{1}{ 1 - \gamma }$. Consequently, the oscillatory behavior observed in some experiments is expected, as a large penalty factor may lead to large gradients and thus unstable updates.

\begin{figure*}[t]
  \centering \includegraphics[width=5in]{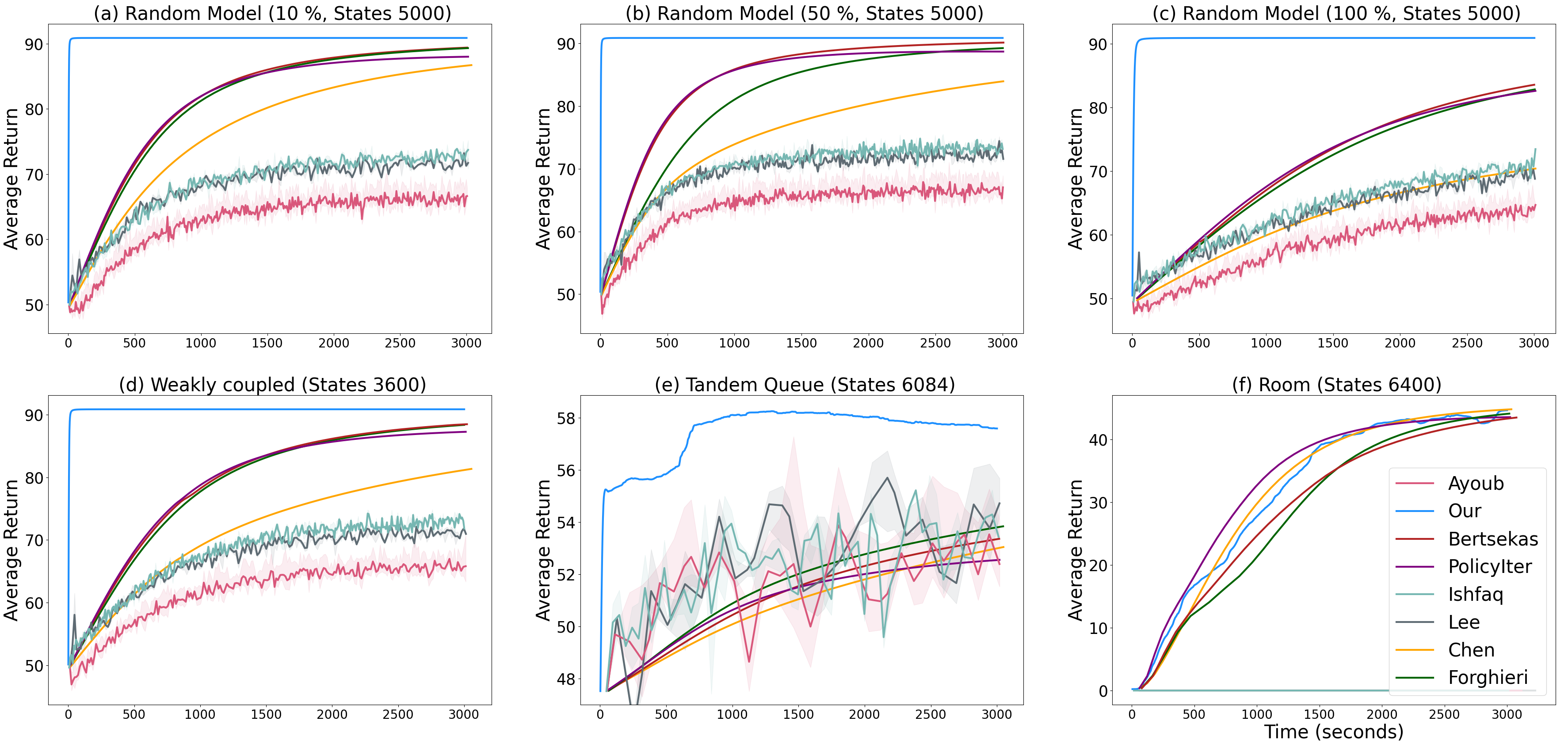}  
  \caption{In the experimental results, the x-axis represents wall-clock time (Execution time on a physical computing system), while the y-axis indicates policy performance.  In the experiments, the results corresponding to Algorithm \ref{alg_HM2} are labeled as \texttt{"}Our\texttt{"}. Accordingly, in the figure, the solid line represents the average over five runs, while the shaded region indicates the range between the maximum and minimum values.
  }\label{result_2}
\end{figure*}

\subsection{Algorithm performance evaluation}

In the previous subsection, we validated the sufficient condition for optimal policy equivalence on simple tasks. In this subsection, we consider more complex tasks, for which Algorithm \ref{alg_HM} is no longer suitable. This is because satisfying the sufficient condition typically requires $|U| > r \approx |\mathcal{S}|$; for instance, in the Four-Room task, $r = |\mathcal{S}|$, implying that the computational complexity of Algorithm \ref{alg_HM} becomes comparable to that of standard policy iteration.

To address this limitation, we leverage Algorithm \ref{alg_HM2} to demonstrate the advantage of homomorphic mappings in large state space. In the experiments,  random model task has $|\mathcal{S}| = 5000$ and $|\mathcal{A}| = 10$; weakly coupled MDP has $|\mathcal{S}| = 3600$ and $|\mathcal{A}| = 10$; the four-room gridworld task has $|\mathcal{S}| = 6400$ and $|\mathcal{A}| = 4$; and the tandem queue management task has $|\mathcal{S}| = 6084$ and $|\mathcal{A}| = 3$. To eliminate value function approximation error, all model-based methods compute the value function using exact planning.

The primary baseline is the standard Policy Iteration (PolicyIter) algorithm. Comparative methods include a classical aggregation technique (Bertsekas) \cite{24227}, as well as five recent approaches proposed by Ayoub et al. \cite{pmlr-v119-ayoub20a}, Chen \cite{chen2022adaptive}, Forghieri et al. \cite{Forghieri_Castel_Hyon_Pennec_2024}, Ishfaq et al. \cite{ishfaq21a}, and Lee et al. \cite{DBLP}. It is worth noting that the methods by Ayoub, Ishfaq, and Lee are model-free, whereas the remaining approaches are model-based. 
To ensure fairness and reproducibility, each algorithm is executed using its default hyperparameter configuration.
It is worth noting that, to ensure fairness and consistency across algorithms, policy optimization is implemented via policy gradient (The learning rate is set to $1 \times 10^{-3}$).  In addition, our method uses $|U| = int(0.5 * r)$ in all experiments. The hyperparameters of the baseline methods are set to their default values.

The experimental results are shown in Figure \ref{result_2}, where each curve depicts the evolution of policy performance over wall-clock time.
From the experimental results, it is evident that Algorithm \ref{alg_HM2} (labeled as \texttt{"}Ours\texttt{"}) consistently outperforms other methods across all tasks except the Four-Room environment, where performance is comparable. First, model-based methods generally outperform model-free approaches, as they compute value functions via exact planning rather than sampling. For model-based state aggregation methods, our approach performs the aggregation entirely through matrix operations (as formalized in Theorem \ref{theorem4}), whereas the baseline methods rely on value-based procedures that involve complex computations, typically implemented using nested for-loops. Since matrix operations are substantially more efficient than iterative loops in practical computation, especially in large state spaces, our method demonstrates superior computational efficiency.

In subfigure (f) of Figure \ref{result_2}, we observe that nearly all methods fail to surpass the baseline \texttt{"}PolicyIter\texttt{"}. A possible explanation is the extremely sparse reward structure in the Four Room environment. For model-free methods, the combination of a large state space and sparse rewards makes it difficult to explore critical states effectively. For model-based methods, the sparsity of the reward function may slow down policy iteration, thereby reducing the efficiency gains from aggregation. A rigorous theoretical analysis of this phenomenon is left for future work.

\section{Conclusion}\label{sec_5}

This work presents a  framework for state aggregation in Markov Decision Processes through the lens of homomorphic mappings. By relaxing the stringent constraints of classical homomorphic MDPs, we introduce the notion of homomorphic Markov chains, enabling value-function linearity to be enforced over individual policy-induced Markov chains rather than the entire policy space. Under this relaxed framework, we derive sufficient conditions for optimal policy equivalence, thereby ensuring that policies optimized in the abstract space remain optimal in the ground MDP.

In cases where these sufficient conditions are not met, we analyze the resulting approximation error and establish theoretical bounds on policy performance degradation. These insights motivate the development of two algorithms: Homomorphic Policy Gradient (HPG), which enforces exact homomorphism, and Error-Bounded Homomorphic Policy Gradient (EBHPG), which balances approximation accuracy with computational efficiency via least-squares projections.
Experimental results across diverse benchmark environments validate the effectiveness of our methods. In particular, we demonstrate that the proposed algorithms achieve consistent performance improvements over existing state aggregation techniques, both in idealized and approximate settings. These findings highlight the practical viability of homomorphism-based abstraction for efficient and reliable reinforcement learning in large-scale or structured decision processes.

Regarding the limitations of this work, we first note that the sufficient condition for optimal policy equivalence may still be overly restrictive. In scenarios involving approximation errors, our method may fail to guarantee convergence to the optimal policy, which could limit the algorithm's performance. Moreover, our analysis does not extend to continuous state spaces, presenting a potential direction for future research.

\section*{references}

\bibliographystyle{IEEEtran}
\bibliography{IEEEexample}

\end{document}